\newtheorem{thm}{Theorem}
\newtheorem{cor}[thm]{Corollary}
\newtheorem{lem}[thm]{Lemma}
\newtheorem{defn}{Definition}
\def\Cov{\mbox{Cov}}
\def\tr{\mbox{trace}}
\def\diag{\mbox{diag}}
\def\diag{\mbox{Diag}}
\def\uv{\mathbf u}
\def\vv{\mathbf v}
\def\wv{\mathbf w}
\def\xv{\mathbf x}
\def\yv{\mathbf y}
\def\zv{\mathbf z}
\def\Av{\mathbf A}
\def\Iv{\mathbf I}
\def\Ov{\mathbf O}
\def\Pv{\mathbf P}
\def\Rv{\mathbf R}
\def\Vv{\mathbf V}
\def\Wv{\mathbf W}
\def\Xv{\mathbf X}
\def\Zv{\mathbf Z}
\newcommand{\Sigmav}{\mbox{\boldmath{$\Sigma$}}}
\newcommand{\Lambdav}{\mbox{\boldmath{$\Lambda$}}}
\newcommand{\Dc}{\mathcal{D}}
\newcommand{\Fc}{\mathcal{F}}
\newcommand{\Ic}{\mathcal{I}}
\newcommand{\Uc}{\mathcal{U}}
\newcommand{\Vc}{\mathcal{V}}
\newcommand{\Wc}{\mathcal{W}}
\newcommand{\Xc}{\mathcal{X}}
\newcommand{\Rb}{\mathbb{R}}
\newcommand{\gr}{{\rm{Gr}}}
\newcommand{\bp}{{\rm{bp}}}
\newcommand{\out}{{\rm{out}}}
\def\diag{\mbox{diag}}
\def\tr{\mbox{Tr}}
\def\1v{\mathbf 1}
\def\0v{\mathbf 0}
\title{Subspace Recovery in Winsorized PCA: Insights into Accuracy and Robustness}
\author{Sangil Han, Kyoowon Kim, and  Sungkyu Jung \\
Department of Statistics Seoul National University}
\begin{document}
\maketitle

\begin{abstract}
In this paper, we explore the theoretical properties of subspace recovery using Winsorized Principal Component Analysis (WPCA), utilizing a common data transformation technique that caps extreme values to mitigate the impact of outliers. Despite the widespread use of winsorization in various tasks of multivariate analysis, its theoretical properties, particularly for subspace recovery, have received limited attention. We provide a detailed analysis of the accuracy of WPCA, showing that increasing the number of samples while decreasing the proportion of outliers guarantees the consistency of the sample subspaces from WPCA with respect to the true population subspace. Furthermore, we establish perturbation bounds that ensure the WPCA subspace obtained from contaminated data remains close to the subspace recovered from pure data. Additionally, we extend the classical notion of breakdown points to subspace-valued statistics and derive lower bounds for the breakdown points of WPCA. Our analysis demonstrates that WPCA exhibits strong robustness to outliers while maintaining consistency under mild assumptions. A toy example is provided to numerically illustrate the behavior of the upper bounds for perturbation bounds and breakdown points, emphasizing winsorization's utility in subspace recovery.
\end{abstract}

\section{INTRODUCTION}

Winsorization, often referred to as ``clipping,'' has long been recognized as a common and effective tool for handling extreme values in data analysis. Winsorization involves capping extreme values by projecting data points lying outside a specified boundary onto that boundary, ensuring that the support of the transformed data becomes bounded within a specified radius. This transformation is widely used in various fields, including differential privacy, where bounded data support is required to set scales for privacy-preserving noise \citep{karwaFiniteSampleDifferentially2017a, abadiDeepLearningDifferential2016b, kamathDifferentiallyPrivateAlgorithms2019a, biswasCoinPressPracticalPrivate2020a}. Winsorization also effectively handles outliers, particularly from heavy-tailed distributions or corrupted data, by reducing their influence on subsequent analyses without excluding data points \citep{bickelRobustEstimatesLocation1965a,yaleWinsorizedRegression1976a}. This ability to mitigate the impact of anomalies while preserving the overall dataset makes winsorization a frequently adopted technique in multivariate analysis, robust statistics, and other applications \citep{joseSimpleRobustAverages2008a,beaumontChapter11Dealing2009a}.

In the context of high-dimensional data, where dimension reduction and subspace recovery are crucial, winsorization has been incorporated as a preprocessing step to enhance robustness against anomalies and outliers. Dimension reduction is essential for summarizing high-dimensional datasets by identifying a lower-dimensional subspace that retains the significant variance of the data. Principal Component Analysis (PCA) is the most commonly used method for subspace recovery, but its sensitivity to outliers has led researchers to explore robust alternatives. Various approaches, including optimization-based methods, robust covariance estimation, and subsampling techniques, often involve complex optimization or filtering processes with heavy computational burdens \citep{candesRobustPrincipalComponent2011a,brahmaReinforcedRobustPrincipal2018a,zhangRobustRegularizedSingular2013a}. In contrast, data transformations such as winsorization offer a convenient and scalable solution. Winsorization can be applied universally before performing analyses, ensuring that subsequent analyses operate on transformed data with reduced outlier influence. This versatility has made winsorization a valuable tool in a wide range of high-dimensional applications, from dimension reduction to other forms of multivariate analysis.

Despite the widespread use of winsorization in practice, the theoretical foundations of its impact on subspace recovery have not been fully established. While empirical results have demonstrated its effectiveness in controlling the influence of outliers, there remains a significant gap in understanding how winsorization affects the accuracy and robustness of PCA from a theoretical standpoint. While \cite{raymaekersGeneralizedSpatialSign2019a,leyderGeneralizedSphericalPrincipal2024a} demonstrated the robustness of the covariance matrix of a winsorized random vector in terms of the influence function of eigenvectors and the breakdown of eigenvalues, {their results do not address the case where the number of variables $p$ increases, nor do they explore how the winsorization radius (clipping threshold) interacts with $p$ in the high-dimensional model.} Furthermore, the effects of winsorization on subspace recovery, particularly in terms of consistency and breakdown points, have yet to be rigorously quantified.


We contribute to the theoretical understanding and robustness of Winsorized PCA (WPCA) in subspace recovery, offering new insights into its consistency and breakdown points.

\textbf{Accuracy and Consistency in Subspace Recovery.}
We derive concentration bounds (in Theorem~\ref{thm:tail_ang}) for the PC subspace obtained through WPCA under a broad class of elliptical distributions \citep{cambanisTheoryEllipticallyContoured1981a,kelkerDistributionTheorySpherical1970a,kolloAdvancedMultivariateStatistics2006a}, which generalize multivariate Gaussian distributions and account for both heavy and light-tailed behavior. The derived concentration bounds for the principal angles between the sample WPCA subspace and the population subspace demonstrate that the sample subspace converges as the sample size increases and the proportion of contamination decreases. Additionally, we demonstrate that WPCA maintains consistency even with extremely large winsorization radius in the subgaussian case, where the distribution has light tails. We further validate the performance of our concentration bounds through a simulation study in high-dimensional settings. The results show that while the concentration bounds perform well in practice, they are not fully optimized, suggesting potential for further improvement.

\textbf{Strong and Weak breakdown.}
We introduce a new notion of strong breakdown (Definitions~\ref{def:strong_bp} and \ref{def:strongbp_subspace}), which offers a more sensitive measure of breakdown compared to the traditional notion. In subspace recovery, while traditional breakdown implies partial orthogonality between corrupted and uncorrupted subspaces \citep{hanRobustSVDMade2024b}, strong breakdown implies full orthogonality. This provides a more refined understanding of estimator behavior in extreme scenarios. We apply both strong and weak breakdown concepts to WPCA, providing a detailed analysis of its robustness.

\textbf{Breakdown Point Analysis for WPCA and traditional PCA.}
We show in Theorem~\ref{thm:breakdown} that the (strong) breakdown point of the $d$-dimensional subspace from WPCA has a lower bound proportional to the ratio of the (averaged) eigenvalue gap of the sample covariance of the winsorized data to the square of the winsorization radius, indicating WPCA’s resistance to contamination. In contrast, the breakdown points for traditional PCA are much smaller than those of WPCA. This demonstrates WPCA's superior robustness in subspace recovery. 
We confirm, in a simulated data example, our lower bound is indeed effective. 

\textbf{Robustness through Perturbation Bounds.}
We demonstrate that the PC subspaces obtained through WPCA not only resist breakdown under contamination but also experience minor perturbation when comparing subspaces from uncontaminated and contaminated data (Theorem~\ref{thm:perturbation}). We derive perturbation bounds for WPCA, showing that the deviation in the recovered subspace scales linearly with the level of contamination. These bounds confirm WPCA's robustness, indicating that it can tolerate small amounts of corruption without significant deviation in the subspace recovery.



\section{WINSORIZED PCA
}\label{sec:WPCA}
We implement WPCA as follows. Let $\Xv = [\xv_1, \dots, \xv_n]' \in \Rb^{n \times p}$ represent a centered, potentially contaminated data matrix consisting of $n$ samples with $p$ variables. 
The winsorized dataset is denoted by $\Xv^{(r)} = [\xv_{1}^{(r)}, \dots, \xv_{n}^{(r)}]'$, where each winsorized observation is defined as: 
\begin{align}\label{eq:winsor_sample}
\xv_{i}^{(r)} := \begin{cases}
    \xv_{i} & \text{ if }\|\xv_{i}\|_2 \leq r,\\
    \frac{r \xv_{i}}{\|\xv_{i}\|_2} & \text{ if }\|\xv_{i}\|_2 > r,
\end{cases}
\end{align} where $r > 0$ is the winsorization radius. The winsorization radius $r$ defines the boundary beyond which data points are projected onto the surface of a radius-$r$ ball.


Let $\Vc_d^{(r)}(\Xv)$ denote the $d$-dimensional PC subspace spanned by the eigenvectors corresponding to the largest $d$ eigenvalues of the winsorized sample covariance matrix, $\frac{1}{n}(\Xv^{(r)})'(\Xv^{(r)})$. We call this subspace $d$-dimensional winsorized (sample) PC subspace.

Infinitesimally small radius $r$ corresponds to a limiting case of winsorization, where all observations are normalized, which is equivalent to the transformation used in Spherical PCA (SPCA) by \citep{locantoreRobustPrincipalComponent1999a}, and other methods using normalization \citep{mardenRobustEstimatesPrincipal1999a,visuriSubspacebasedDirectionofarrivalEstimation2001a, taskinenRobustifyingPrincipalComponent2012a, hanRobustSVDMade2024b}. On the other hand, when the radius $r$ is sufficiently large such that $r \geq \max_i\{\|\xv_{i,\epsilon}\|_2\}$, no data points are winsorized, and WPCA coincides with traditional PCA. 

{When performing WPCA on a given dataset, any efficient Singular Value Decomposition (SVD) algorithm can be applied to the winsorized data \eqref{eq:winsor_sample}. Numerous studies and implementations of SVD have been developed to handle high-dimensional or large-sample datasets effectively. Factors such as the gap between singular values, the number of rows or columns, and the sparsity (the number of nonzero elements) of the data matrix can influence the choice of the SVD algorithm. Under various scenarios, fast and accurate SVD implementations, such as algorithms proposed by \cite{allen-zhuLazySVDEvenFaster2016a,muscoRandomizedBlockKrylov2015a,bhojanapalliGlobalOptimalityLocal2016a} can be utilized.}

Our analysis focuses on how closely this estimated subspace $\Vc_d^{(r)}(\Xv)$ approximates the (population) true subspace (in Section~\ref{sec:stat_accuracy}) and the target subspace derived from the uncontaminated dataset $\Xv_0$ (in Section~\ref{sec:robustness}). 

\section{
ACCURACY OF WINSORIZED PCA}\label{sec:stat_accuracy}
A zero-mean random vector $\xv \in \Rb^p$ is said to follow an \emph{elliptical distribution} with covariance matrix $\Sigmav$, if, 
for any orthogonal matrix $\Rv \in \Rb^{p \times p}$,
\begin{align}\label{eq:elliptical}
\Rv \Sigmav^{-\frac{1}{2}} \xv \overset{\rm{d}}{=} \Sigmav^{-\frac{1}{2}} \xv,
\end{align}
where $\xv\overset{\rm{d}}{=}\yv$ means $\xv$ and $\yv$ have the same distribution. Elliptical distributions, which include multivariate normal and $t$-distributions, generalize multivariate normal distributions by preserving elliptical symmetry \eqref{eq:elliptical}. Elliptical distributions characterized by a covariance matrix $\Sigmav$ form a family of distributions defined solely by their elliptical symmetry. A distribution belonging to the family of elliptical distributions cannot be fully specified by its covariance matrix alone, as it may exhibit either heavy or light tails. We use the notation $\xv \sim \Fc_{\Sigmav}$ to indicate that $\xv$ follows an elliptical distribution with covariance matrix $\Sigmav$, allowing us to encompass a wide range of random vectors with various tail behaviors. One notable property is that for any $\xv \sim \Fc_{\Sigmav}$ with population covariance matrix $\Sigmav$, winsorization of $\xv$ preserves the eigenvectors and the order of eigenvalues in the covariance matrix \citep{raymaekersGeneralizedSpatialSign2019a}, allowing us to infer the eigenstructure of the population covariance matrix even after winsorizing the data points.

To model contamination in the data, we introduce a contamination parameter $\epsilon \in [0, 0.5)$, representing the proportion of corrupted data points among $n$ data points. 
We assume that the uncontaminated $(1-\epsilon)n$ data points in $\Xv$, are i.i.d. realizations of a random vector $\xv \sim \Fc_{\Sigmav}$, and the contaminated $\epsilon n$ data points in $\Xv$ follow an arbitrary distribution. We denote the $\epsilon$-contaminated dataset by $\Xv = \Xv_{\epsilon} = [\xv_{1,\epsilon}, \dots, \xv_{n,\epsilon}]'$, and the set of indices corresponding to the contaminated data points by $\Ic_{\epsilon}$,
with $|\Ic_{\epsilon}| = \epsilon n$. When $\epsilon = 0$, the contaminated dataset becomes the uncontaminated dataset $\Xv_0$ with $n$ realizations of $\xv\sim\Fc_{\Sigmav}$.

Note that in Section~\ref{sec:robustness}, we will remove the distributional assumption. In this case, $\epsilon$-contamination will be allowed to occur at arbitrary positions in the pure dataset $\Xv_0$ with arbitrary values.

\subsection{PC Subspace Concentration}\label{sec:concentration}
In this section, we provide concentration inequalities for the winsorized PC subspace $\Vc_d^{(r)}(\Xv_\epsilon)$. This subspace is obtained by applying the traditional PCA on the winsorized contaminated dataset $\Xv_{\epsilon}^{(r)}$ as described in Section~\ref{sec:WPCA}. We demonstrate how the 
subspace 
$\Vc_d^{(r)}(\Xv_\epsilon)$ concentrates around the target subspace. Let the population covariance matrix have eigendecomposition: $\Cov(\xv) = \Sigmav = \Vv\Lambdav\Vv'$ where $\Vv'\Vv = \Iv_p$, and $\Lambdav = \diag(\lambda_1, \dots, \lambda_p)$ with $\lambda_1 \geq \dots \geq \lambda_p > 0$. Here $\Vv = [\vv_1,\dots,\vv_p]$ contains the eigenvectors of $\Sigmav$ and $\lambda_j$ are the corresponding eigenvalues. Our target population PC subspace is the $d$-dimensional subspace spanned by the first $d$ eigenvectors: 
\[
    \Vc_d = {\rm{span}}(\vv_1,\dots,\vv_d).
\]
We adopt the largest principal angle as a metric to measure the difference between subspaces \citep{wedinAnglesSubspacesFinite1983a,knyazevMajorizationChangesAngles2007a,qiuUnitarilyInvariantMetrics2005a}. For two given $d$-dimensional subspaces $\Uc$ and $\Wc$ of $\Rb^p$, the $d$ principal angles $0\leq \theta_1 \leq \dots \leq \theta_d$ between $\Uc$ and $\Wc$ are defined as follows. The smallest principal angle $\theta_1 $ between $\Uc$ and $\Wc$ is 
\begin{align}\label{eq:min_principal_angle}
    \cos(\theta_1) = \max_{\uv \in \Uc}\max_{\wv \in \Wc} \frac{|\uv'\wv|}{\|\uv\|_2\|\wv\|_2} = \frac{|\uv_1'\wv_1|}{\|\uv_1\|_2\|\wv_1\|_2}
\end{align}
where $\uv_1\in \Uc$ and $\wv_1 \in \Wc$ are the vectors satisfying $\cos(\theta_1) = \frac{|\uv_1'\wv_1|}{\|\uv_1\|_2\|\wv_1\|_2}$. The subsequent principal angles $\theta_j$ $(j=1,\dots,d)$ are defined recursively by:
\begin{align}\label{eq:principal_angle}
\cos(\theta_j) = \max_{\uv \in \Uc}\max_{\wv \in \Wc} \frac{|\uv'\wv|}{\|\uv\|_2\|\wv\|_2} = \frac{|\uv_j'\wv_j|}{\|\uv_j\|_2\|\wv_j\|_2}
\end{align}
subject to $\uv'\uv_k = 0$ and $\wv'\wv_k=0$ for $k=1,\dots,j-1$. The largest principal angle $\theta_d$ provides an upper bound on the deviation between the subspaces. Since $\theta_1 \leq \dots \leq \theta_d$, if $\theta_d = 0$, then all principal angles are zero, which implies that the two subspaces $\Uc$ and $\Wc$ coincide. In this context, we denote $\Theta(\Uc,\Wc) = \theta_d$ for the largest principal angle.

Let $\Theta_{\epsilon}^{(r)} = \Theta(\Vc_d^{(r)}(\Xv_{\epsilon}), \Vc_d)$ be the largest principal angle between $\Vc_d^{(r)}(\Xv_{\epsilon})$ and $\Vc_d$. 
We present the following theorem to establish the consistency of the winsorized PC subspace.
\begin{thm}\label{thm:tail_ang}
Assume $\xv_i|_{i\not\in\Ic_{\epsilon}}, \overset{\rm{i.i.d}}{\sim} \Fc_{\Sigmav}$ follow an elliptical distribution and $\lambda_d > \lambda_{d+1}$. Let $\lambda_j^{(r)}$ denote the $j$th largest eigenvalue of $\Cov(\xv^{(r)})$, where $\xv^{(r)}$ is the winsorized random vector of $\xv \sim \Fc_{\Sigmav}$. {For any $n$ and $p$, }
    \begin{align}\label{eq:tail-nongaussian}
    \begin{split}
        E[\sin \Theta_{\epsilon}^{(r)}] 
        \leq& \frac{2r^2\epsilon}{\lambda_d^{(r)} -\lambda_{d+1}^{(r)}}+\frac{2^8(\frac{r^2\lambda_1}{p\lambda_p}) (\sqrt{\frac{p}{n}} \vee \frac{p}{n})}{\lambda_d^{(r)} -\lambda_{d+1}^{(r)}}.
    \end{split}
    \end{align}
    Moreover, if 
    \begin{align}\label{eq:subgaussian}
        \sup_{\vv \in S^{p-1}} E[(\vv'\Sigmav^{-\frac{1}{2}}\xv)^{2k}] \leq \frac{(2k)!}{2^k k!} \sigma^{2k}
    \end{align}
     for all $k = 1,2,\dots$ with some $\sigma > 0$, then
    \begin{align}\label{eq:tail-subgaussian}
    \begin{split}
        E[\sin \Theta_{\epsilon}^{(r)}] 
        \leq& \frac{2r^2\epsilon}{\lambda_d^{(r)} -\lambda_{d+1}^{(r)}}+\frac{2^8\lambda_1(\frac{r^2}{p\lambda_p} \wedge \sigma^2) (\sqrt{\frac{p}{n}} \vee \frac{p}{n})}{\lambda_d^{(r)} -\lambda_{d+1}^{(r)}}.
    \end{split}
    \end{align}
\end{thm}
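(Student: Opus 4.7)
The plan is to reduce the bound on $\sin\Theta_\epsilon^{(r)}$ to an operator-norm perturbation bound on the winsorized sample covariance matrix, then control that perturbation separately on the contamination and the sampling-fluctuation pieces. Write $\hat{\Sigmav}_\epsilon^{(r)} = \frac{1}{n}(\Xv_\epsilon^{(r)})'\Xv_\epsilon^{(r)}$ and $\Sigmav^{(r)} = \Cov(\xv^{(r)})$. The eigenstructure-preservation property for elliptical distributions cited in Section~\ref{sec:WPCA} guarantees that $\Sigmav^{(r)}$ shares its eigenvectors with $\Sigmav$ and preserves the eigenvalue ordering, so the target $\Vc_d$ coincides with the top-$d$ eigenspace of $\Sigmav^{(r)}$. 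A Davis--Kahan / Wedin $\sin\Theta$ theorem then yields $\sin\Theta_\epsilon^{(r)}\leq 2\|\hat{\Sigmav}_\epsilon^{(r)}-\Sigmav^{(r)}\|_{op}/(\lambda_d^{(r)}-\lambda_{d+1}^{(r)})$ deterministically, so it suffices to estimate $E\|\hat{\Sigmav}_\epsilon^{(r)}-\Sigmav^{(r)}\|_{op}$.

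Next I would split the sample into clean and contaminated indices and decompose
\begin{align*}
\hat{\Sigmav}_\epsilon^{(r)}-\Sigmav^{(r)} &= \Bigl[\tfrac{1}{n}\sum_{i\notin\Ic_\epsilon}\xv_i^{(r)}(\xv_i^{(r)})' - (1-\epsilon)\Sigmav^{(r)}\Bigr] \\
&\quad + \Bigl[\tfrac{1}{n}\sum_{i\in\Ic_\epsilon}\xv_i^{(r)}(\xv_i^{(r)})' - \epsilon\Sigmav^{(r)}\Bigr].
\end{align*}
The contamination bracket has operator norm at most $\epsilon r^2 + \epsilon\|\Sigmav^{(r)}\|_{op}\leq 2\epsilon r^2$, since each rank-one summand has operator norm $\|\xv_i^{(r)}\|_2^2\leq r^2$ and $\|\Sigmav^{(r)}\|_{op}\leq\tr\Sigmav^{(r)} = E\|\xv^{(r)}\|_2^2\leq r^2$. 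Dividing by the spectral gap produces the first summand of \eqref{eq:tail-nongaussian} and \eqref{eq:tail-subgaussian}.

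The heart of the argument is the clean bracket, a sum of i.i.d.\ rank-one PSD matrices with uniform operator norm at most $r^2$. A Koltchinskii--Lounici / Mendelson--Paouris style intrinsic-dimension matrix concentration inequality gives an expectation bound of the form $\|\Sigmav^{(r)}\|_{op}(\sqrt{p/n}\vee p/n)$, after using the crude estimate $\tr(\Sigmav^{(r)})/\|\Sigmav^{(r)}\|_{op}\leq p$ for the effective rank. To convert $\|\Sigmav^{(r)}\|_{op}$ into quantities involving only the original spectrum of $\Sigmav$ and the radius $r$, I use the elliptical representation $\xv = R\Sigmav^{1/2}\uv$ with $\uv$ uniform on $S^{p-1}$ and $R\geq 0$ independent: this gives $\xv^{(r)} = R^{(r)}\Sigmav^{1/2}\uv$ with $R^{(r)}\leq r/\|\Sigmav^{1/2}\uv\|_2$, and since $E\|\Sigmav^{1/2}\uv\|_2^2 = \tr(\Sigmav)/p\geq\lambda_p$, a concentration estimate for the spherical quadratic form yields $\|\Sigmav^{(r)}\|_{op}\leq C(r^2\lambda_1)/(p\lambda_p)$ up to an absolute constant. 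Substituting produces the factor $r^2\lambda_1/(p\lambda_p)$ in \eqref{eq:tail-nongaussian}. For \eqref{eq:tail-subgaussian}, the moment condition \eqref{eq:subgaussian} provides sub-Gaussian tails for each linear functional $\vv'\xv^{(r)}$ with variance proxy scaling like $\sigma^2\lambda_j$, so an alternative sub-Gaussian covariance-concentration inequality gives an estimate with $\sigma^2$ in place of $r^2/(p\lambda_p)$; taking the minimum of the two bounds yields the $(r^2/(p\lambda_p))\wedge\sigma^2$ hybrid.

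The main obstacle I anticipate is obtaining the clean-bracket bound with the specific $r^2\lambda_1/(p\lambda_p)$ factor while avoiding an extraneous $\log p$ term that a direct matrix-Bernstein application would introduce. My plan is to exploit the elliptical representation further: it reduces the operator-norm concentration of a sum of rank-one matrices $(R_i^{(r)})^2\Sigmav^{1/2}\uv_i\uv_i'\Sigmav^{1/2}$ to a scalar moment analysis of $R^{(r)}$ coupled with Levy-type concentration of $\tfrac{1}{n}\sum_i\uv_i\uv_i'$ around $I_p/p$, which is sharp without logarithmic factors. The subgaussian refinement then follows by replacing the deterministic control of $R^{(r)}$ with a sub-Gaussian moment bound derived from \eqref{eq:subgaussian}, and the $\sqrt{p/n}\vee p/n$ dichotomy reflects the Bernstein crossover between the subexponential and the Gaussian regime.
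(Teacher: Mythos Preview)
Your architecture matches the paper's: Davis--Kahan to reduce to $E\|\hat{\Sigmav}_\epsilon^{(r)}-\Sigmav^{(r)}\|$, then a clean/contaminated split, with the contamination piece bounded by a multiple of $\epsilon r^2$. That part is fine (the paper gets $\epsilon r^2$ rather than your $2\epsilon r^2$ by noting that the difference of two PSD matrices with spectra in $[0,r^2]$ has operator norm at most $r^2$, but this is cosmetic).

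The gap is in your treatment of the clean bracket. A Koltchinskii--Lounici bound of the form $\|\Sigmav^{(r)}\|(\sqrt{p/n}\vee p/n)$ requires the $L^{\psi_2}$--$L^2$ norm equivalence $\|\vv'\xv^{(r)}\|_{\psi_2}\lesssim\|\vv'\xv^{(r)}\|_{L^2}$ uniformly in $\vv$, and boundedness of $\xv^{(r)}$ does not deliver this; you have not verified it. Your fallback via the elliptical representation also does not work as written: in $\xv^{(r)}=R^{(r)}\Sigmav^{1/2}\uv$ the effective radius $R^{(r)}=R\min(1,r/(R\|\Sigmav^{1/2}\uv\|_2))$ depends on $\uv$ whenever $\Sigmav$ is not a multiple of the identity, so the radial and angular parts do \emph{not} decouple and the reduction to concentration of $\tfrac{1}{n}\sum_i\uv_i\uv_i'$ breaks down.

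The paper avoids both issues by proving directly that $\xv^{(r)}$ is $\sigma_r$-subgaussian with $\sigma_r^2=\lambda_1 r^2/(\lambda_p p)$ (and $\sigma_r^2=\lambda_1(\sigma^2\wedge r^2/(\lambda_p p))$ under \eqref{eq:subgaussian}), and then invokes a standard subgaussian covariance concentration bound (Wainwright-type) which already has no logarithmic factor. The subgaussian computation hinges on a single identity you are missing: for spherically symmetric $\yv=\Sigmav^{-1/2}\xv$, the ratio $y_1^2/\|\yv\|_2^2$ is distributed as $z_1^2/\|\zv\|_2^2$ for standard Gaussian $\zv$, i.e.\ $\mathrm{Beta}(1/2,(p-1)/2)$, whose $k$th moment is at most $(2k)!/(2^k k!\,p^k)$. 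Feeding this through
\[
E[(\wv'\xv^{(r)})^{2k}]\le \lambda_1^k\,E\Bigl[\bigl(y_1^2\cdot\min(1,r^2/(\lambda_p\|\yv\|_2^2))\bigr)^k\Bigr]
\le \lambda_1^k\Bigl(\tfrac{r^2}{\lambda_p}\Bigr)^k E\Bigl[\bigl(y_1^2/\|\yv\|_2^2\bigr)^k\Bigr]
\]
gives exactly the subgaussian moment growth with the stated parameter. This one lemma is the missing idea that replaces the Koltchinskii--Lounici hypothesis and the failed radial/angular decoupling in one stroke.
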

The assumption $\eqref{eq:subgaussian}$ states that $\yv := \Sigmav^{-\frac{1}{2}}\xv$ is $\sigma$-subgaussian \citep{wainwrightHighDimensionalStatisticsNonAsymptotic2019b, vershyninHighDimensionalProbabilityIntroduction2018a}. This subgaussian assumption implies that each component of the random vector $\yv$ exhibits tail behavior similar to that of a Gaussian distribution, meaning its tails decay exponentially.

Note that the winsorized eigenvalues $\lambda_j^{(r)}$ depend on the winsorization radius $r$, the eigenvalues of $\Sigmav$, and the number of variables $p$. To analyze the consistency of the winsorized PC subspace—in terms of convergence in mean—we consider how the parameters $r$, $p$, and $n$ interact.

\subsubsection{Effect of Winsorization Radius \texorpdfstring{$r$}{r}}\label{sec:effectR}
We begin with a remark on SPCA: One might conjecture that decreasing $r$, leading to SPCA, would cause the upper bound in \eqref{eq:tail-nongaussian} and \eqref{eq:tail-subgaussian} to converge to 0. However, since the ratio ${\lambda_j^{(r)}}/{r^2}$ converges to the $j$th eigenvalue of the covariance matrix of $\xv/\|\xv\|_2$, the upper bounds  do not vanish as $r$ decreases.

Fixing the number of variables $p$, we define $g(n, r) = \Omega(h(n, r))$ if there exist constants $a \leq b$ such that $a \leq g(n, r)/h(n, r) \leq b$. First, consider the case without outliers $(\epsilon = 0)$. As both the winsorization radius $r$ and the sample size $n$ increase, the upper bound becomes $\Omega\left(\frac{r^2}{\sqrt{n}}\right)$, since the eigenvalue gap $\lambda_d^{(r)} - \lambda_{d+1}^{(r)}$ converges to $\lambda_d - \lambda_{d+1}$ as $r$ grows. Consistency is guaranteed if $\frac{r^2}{\sqrt{n}}$ converges to zero. Without any assumptions on tail behavior, however, increasing $r^2$ too rapidly relative to $n$ may negatively impact estimation due to potential extreme values from heavy tails. In the presence of outliers $(\epsilon > 0)$, the deviation term ${2r^2\epsilon}/{(\lambda_d^{(r)} - \lambda_{d+1}^{(r)}) }$ grows at the rate $\Omega(r^2)$ resulting in an upper bound of $\Omega(r^2\epsilon + r^2/\sqrt{n})$ in \eqref{eq:tail-nongaussian}.


For light-tailed distributions and no outliers, a large $r$ with increasing $n$ ensures consistency, as WPCA approaches traditional PCA, which reliably captures PC directions as $n$ grows. The upper bound \eqref{eq:tail-subgaussian} reflects this scenario. When $\epsilon=0$, and $\xv$ is subgaussian, the upper bound becomes $\Omega(\frac{1}{\sqrt{n}})$ as $n$ and $r$ increase. In the presence of outliers, the upper bound simplifies to $\Omega(r^2\epsilon + 1/\sqrt{n}) = \Omega(r^2)$.


\begin{figure}[t]
\centering
  \begin{subfigure}[b]{0.4\textwidth}
    \centering
    \includegraphics[width=\linewidth]{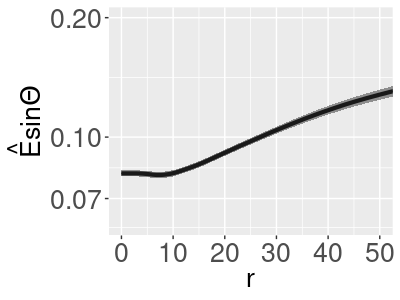}
    \caption{$\xv \sim t_3$, $\epsilon=0$} \label{subfig:sin00t}
  \end{subfigure}
  \begin{subfigure}[b]{0.4\textwidth}
    \includegraphics[width=\linewidth]{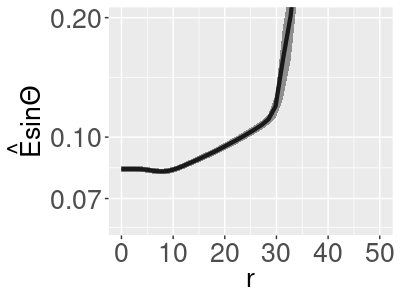}
    \caption{$\xv \sim t_3$, $\epsilon=0.05$} \label{subfig:sin05t}
  \end{subfigure}\\
  \begin{subfigure}[b]{0.4\textwidth}
    \centering
    \includegraphics[width=\linewidth]{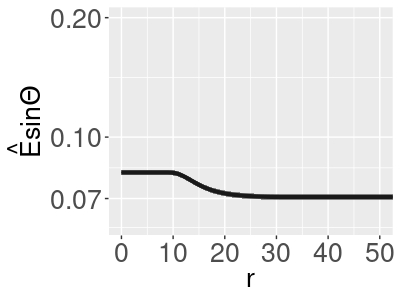}
    \caption{$\xv \sim N$, $\epsilon=0$} \label{subfig:sin00n}
  \end{subfigure}
  \begin{subfigure}[b]{0.4\textwidth}
    \centering 
    \includegraphics[width=\linewidth]{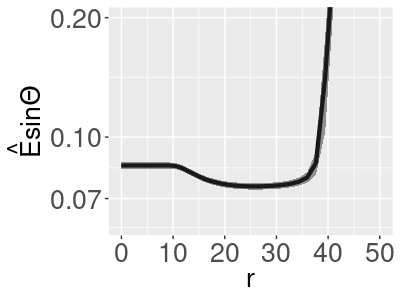}
    \caption{$\xv \sim N$, $\epsilon=0.05$} \label{subfig:sin05n}
  \end{subfigure}
\caption{Empirical expectation $\widehat{E}[\sin \Theta_{\epsilon}^{(r)}]$ for different tail behavior and contamination levels. Panels (a) and (b) show the results when   
$\xv_i$ follows a multivariate $t_{3}$-distribution, while (c) and (d)
represent the case where $\xv_i$ follows a multivariate Gaussian distribution. In each figure, $\epsilon$ denotes the proportion of contaminated data. }\label{fig:effectR}
\end{figure}

Figure~\ref{fig:effectR} illustrates the effect of the winsorization radius $r$ on the empirical expectation $\widehat{E}[\sin \Theta_{\epsilon}^{(r)}]$ (or, simply the `loss') in both heavy-tailed ($t_3$) and light-tailed (Gaussian) distributions. The data generation details are provided in the supplementary material. The upper figures correspond to the heavy-tailed $t_3$-distribution. In Figure~\ref{subfig:sin00t}, even in the absence of outliers, we observe that the loss increases, reaching approximately 0.19 as $r$ grows. When outliers are present, as shown in Figure~\ref{subfig:sin05t}, the loss rises significantly and approaches 1 as $r$ increases. The results suggest that the radius does not need to be infinitesimally small; there exists a non-zero radius $r$ where the loss is minimized in both cases.

In contrast, the lower figures depict the results for a multivariate Gaussian distribution, which has light tails. As shown in Figure~\ref{subfig:sin00t} and \ref{subfig:sin05t}, the behavior differs from that of the $t_3$ distribution. When there are no outliers, increasing $r$ slightly improves the loss. However, when outliers are present, as shown in Figure~\ref{subfig:sin05n}, the loss decreases slightly for small $r$, then increases sharply as $r$ continues to grow.

\subsubsection{Effect of Winsorization Radius \texorpdfstring{$r$}{r} in High Dimension}\label{sec:effectR_highp}
In this section, we examine the high-dimensional setting where the number of variables $p$ increases. We assume that for $j \geq p_0$, the eigenvalues of $\Sigmav$ remain constant at $\lambda_j = \lambda$ for some $p_0 > d$. To analyze the scenario where $n$, $p$, and $r$ increase together, we assume $r = p^{1/2 + \beta}$, where $\beta \in (-\infty, \infty)$. When $\beta = 0$, the radius is proportional to $\sqrt{p}$. Since the expected norm of the random vector is $E[\xv'\xv] = \sum_{j=1}^p \lambda_j = \Omega(p)$, setting $r = \sqrt{p}$ results in many data points being projected (by the winsorization), while a sufficient number remain un-projected. Positive $\beta$ implies fewer projected points, while negative $\beta$ means more projected points.
\begin{cor}\label{thm:tail_ang_cor}
    Assume $\xv_i|_{i\not\in\Ic_{\epsilon}}, \overset{\rm{i.i.d}}{\sim} \Fc_{\Sigmav}$ follow an elliptical distribution and $\lambda_d > \lambda_{d+1}$. Let $r = p^{1/2 + \beta}$ with $\beta \in (-\infty,\infty)$, and $C_1,C_2,$ and $C_3$ be positive absolute constants.
    \begin{align}\label{eq:tail-nongaussian_asy}
    \begin{split}
        E[\sin \Theta_{\epsilon}^{(r)}] 
        \leq& C_1 p^{1+2(\beta \vee 0)}\epsilon + C_2 p^{2(\beta \vee 0 )}(\sqrt{\frac{p}{n}} \vee \frac{p}{n} ).
    \end{split}
    \end{align}
    Moreover, if $\Sigmav^{-1/2}\xv$ is $\sigma$-subgaussian, then
    \begin{align}\label{eq:tail-subgaussian_asy}
    \begin{split}
        E[\sin \Theta_{\epsilon}^{(r)}] 
        \leq& C_1 p^{1+2(\beta \vee 0)}\epsilon + C_3(\sqrt{\frac{p}{n}} \vee \frac{p}{n} ).
    \end{split}
    \end{align}
\end{cor}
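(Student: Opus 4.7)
The plan is to apply Theorem~\ref{thm:tail_ang} with $r = p^{1/2+\beta}$ and reduce each right-hand-side term to the asymptotic form claimed in the corollary. The central technical task is a lower bound on the winsorized eigenvalue gap,
\[
\lambda_d^{(r)} - \lambda_{d+1}^{(r)} \;\geq\; c\cdot \min(1, r^2/p) \;=\; c\cdot p^{2(\beta\wedge 0)},
\]
valid uniformly in $p$, where $c$ depends only on $\lambda_1$, $\lambda$, and $\lambda_d-\lambda_{d+1}$. Once this is available, substitution into the bounds of Theorem~\ref{thm:tail_ang} is essentially mechanical.

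First I would use the elliptical stochastic representation $\xv = R\Vv\Lambdav^{1/2}\uv$ with $\uv$ uniform on the unit sphere of $\Rb^p$ and $R\geq 0$ independent with $E[R^2]=p$, so that $\cov(\xv)=\Sigmav$. Since winsorization preserves the eigenvectors of $\Sigmav$, a direct calculation in this representation yields
\[
\lambda_j^{(r)} \;=\; \lambda_j\,E\bigl[u_j^2\,\min(R^2,\,r^2/S^2)\bigr], \qquad S^2 := \uv'\Lambdav\uv = \sum_{k=1}^{p}\lambda_k u_k^2.
\]
As $r\to\infty$ this gives $\lambda_j^{(r)}\to\lambda_j$; as $r\to 0$ it gives $\lambda_j^{(r)} \approx \lambda_j\,r^2\,E[u_j^2/S^2]$. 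Under the assumption that the tail eigenvalues equal $\lambda$, $\tr(\Sigmav)=\Theta(p)$ and $S^2$ concentrates around $\tr(\Sigmav)/p=\Theta(1)$, so $E[u_j^2/S^2]=\Theta(1/p)$ and the small-$r$ gap is $\Theta(r^2/p)$. An interpolation argument, driven by concentration of $S^2$, then extends these two regimes to the uniform lower bound on the gap displayed above.

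Plugging the gap bound into \eqref{eq:tail-nongaussian} with $r^2=p^{1+2\beta}$ and $\lambda_p=\lambda$, the contamination term becomes at most $2p^{1+2\beta}\epsilon/(c p^{2(\beta\wedge 0)}) = O(p^{1+2(\beta\vee 0)})\epsilon$, and the sampling term becomes $O(p^{2\beta}/p^{2(\beta\wedge 0)})(\sqrt{p/n}\vee p/n)=O(p^{2(\beta\vee 0)})(\sqrt{p/n}\vee p/n)$, which yields \eqref{eq:tail-nongaussian_asy}. For the subgaussian bound \eqref{eq:tail-subgaussian}, the extra factor $r^2/(p\lambda)\wedge\sigma^2$ equals $\sigma^2$ when $\beta\geq 0$ and $p^{2\beta}/\lambda$ when $\beta<0$; in either case the $p^{2\beta}$ factor is either absent or cancels against the gap $c p^{2(\beta\wedge 0)}$, leaving a constant coefficient on $\sqrt{p/n}\vee p/n$, which is exactly \eqref{eq:tail-subgaussian_asy}.

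The main obstacle is the uniform gap lower bound at intermediate $r$, where neither the large-$r$ nor the small-$r$ asymptotics alone suffice. Because $u_j$ appears inside $S^2$, the difference $\lambda_d^{(r)}-\lambda_{d+1}^{(r)}$ is not a clean product of the scalar gap $\lambda_d-\lambda_{d+1}$ with a single common expectation; instead one must track cancellations between two almost-equal expectations. A workable route is to condition on $R$ and on the residual $S^2-\lambda_d u_d^2 - \lambda_{d+1}u_{d+1}^2$, reducing the problem to a two-dimensional spherical integral for which the difference can be estimated in closed form, and then to use the concentration of $S^2$ around its mean to control the remaining variation.
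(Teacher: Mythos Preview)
Your overall strategy---plug $r=p^{1/2+\beta}$ into Theorem~\ref{thm:tail_ang} and reduce everything to a lower bound $\lambda_d^{(r)}-\lambda_{d+1}^{(r)}\gtrsim p^{2(\beta\wedge 0)}$---is exactly what the paper does, and your substitution arithmetic in the last two paragraphs is correct. The difference is in how the gap bound is obtained.

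The paper avoids the decoupling problem you flag by switching representations and working asymptotically. Instead of the uniform-on-sphere representation $\xv=R\Lambdav^{1/2}\uv$, it invokes the Gaussian scale-mixture representation $y_j=\sqrt{w}\,z_j$ with $z_j$ i.i.d.\ standard normal (valid for a consistent family of elliptical distributions as $p$ grows, via Kingman/Schoenberg). It then computes, separately for $\beta>0$, $\beta=0$, $\beta<0$, the limit of the appropriately normalized $\lambda_j^{(r)}$ as $p\to\infty$ using dominated convergence and Fatou. The crucial observation is that in each regime the limit \emph{factors} as $\lambda_j$ times a $j$-independent constant (namely $1$, $E[w\wedge 1/\lambda]$, and $1/\lambda$, respectively), because $s^2/p\to w\lambda$ does not feel any single coordinate. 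Hence $\lambda_d^{(r)}-\lambda_{d+1}^{(r)}$, after normalization, converges to $(\lambda_d-\lambda_{d+1})$ times that same constant, and the gap bound is immediate---no conditioning on the residual, no two-dimensional spherical integral.

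Your proposed route would also succeed, and has the merit of staying within the finite-$p$ stochastic representation without appealing to the infinite-dimensional mixture form. But the ``main obstacle'' you describe (intermediate $r$, coupling of $u_j$ with $S^2$) is self-imposed: since the corollary is asymptotic in $p$ with $\beta$ fixed, it is enough to show that the normalized gap has a positive limit, and for that the paper's limit argument is both shorter and cleaner than the decoupling you sketch.
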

In both upper bounds \eqref{eq:tail-nongaussian_asy} and \eqref{eq:tail-subgaussian_asy}, the first term, related to the contamination proportion $\epsilon$, is dominant, growing at the rate $p^{1 + 2(\beta \vee 0)}$. Therefore, an excessively large radius $r = p^{1/2 + \beta}$ with $\beta > 0$ may cause significant distortion in the winsorized PC subspaces. On the other hand, when there are no outliers $(\epsilon = 0)$, a large sample size with $p/n$ converging to 0 guarantees consistency, provided that $\Sigma^{-1/2}\xv$ is subgaussian or the data is heavily winsorized with $\beta \le 0$. We numerically demonstrate the consistency of winsorized PC subspaces in the scenario with increasing $p$ in Section~\ref{sec:numeric}.

{It is known that the estimation of PC subspace has the minimax rate of $\sqrt{p/n}$ \citep{duchiSubspaceRecoveryHeterogeneous2022a, caiOptimalEstimationRank2015a,zhangHeteroskedasticPCAAlgorithm2022a,caiOptimalDifferentiallyPrivate2024c}. Our asymptotic upper bound can be compared with this rate. For a careful comparison between our rate involving the contamination rate $\epsilon$ and the minimax rate, we will assume that the number of contaminated observations is fixed. This simplification gives the rates $(\sqrt{p/n} + p/n)p^{2(\beta \vee 0)}$ for elliptical distributions, and $(\sqrt{p/n} + p^{1+ 2(\beta \vee 0)}/n)$ under additional sub-Gaussian assumption, for our error bounds. When $n > p$ and if we choose $\beta \leq 0$, our rates above become $O(\sqrt{p/n})$. Thus, our method achieves the minimax rate of $\sqrt{p/n}$, demonstrating strong performance even in the presence of contamination. While our rate $O(p/n)$ is sub-optimal when compared to the minimax lower bound for $p > n$, WPCA maintains both robustness and accuracy, even in the challenging scenarios where outliers are heavily contaminated.}

\subsection{Numerical Study}\label{sec:numeric}

We simulate the concentration bounds of winsorized PC subspaces in a high-dimensional setting, where the number of variables $p$ increases. For $k = 1,\ldots, 4$, we set the dimension $p_k = 1000k$, and the sample size $n_k$ to satisfy 
{$p_k/n_k = 1/(2k)$}. This ensures $p/n$ converges to 0 as $p$ increases. We generate data from two distributions: a heavy-tailed multivariate $t_3$ and a light-tailed Gaussian. The target subspace dimension is $d=2$, and we set two outliers with magnitudes proportional to $np$, positioned orthogonally to the target subspace; that is, $\epsilon =2/n$.

We test three winsorization radii: $r_{1k} = 1$, $r_{2k} = p_k^{1/2}$, and $r_{3k} = (p_k \log p_k )^{1/2}$. With respect to the parameterization of the radius in Corollary \ref{thm:tail_ang_cor}, $r_{lk} = p_k^{1/2 + \beta_l}$ with $\beta_1 = -1/2$ and $\beta_2 = 0$. The third radius $r_{3k}$ grows slightly faster than $r_{2k}$. These choices correspond to the effects of small, moderate, and large winsorization radii in high-dimensional settings.

\begin{figure}[t]
\centering
\begin{subfigure}[b]{1\textwidth}
    \centering
    \includegraphics[width=0.5\linewidth]{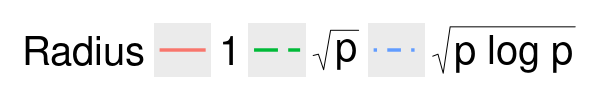}
  \end{subfigure}\\
  \begin{subfigure}[b]{0.4\textwidth}
    \centering
    \includegraphics[width=\linewidth]{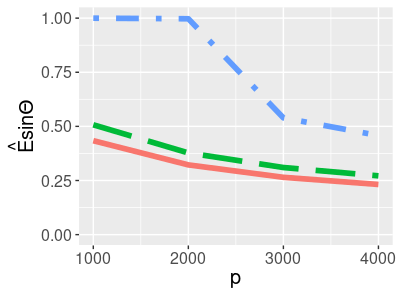}
    \caption{$\xv \sim t_3$} \label{subfig:numeric_t0}
  \end{subfigure}
  \begin{subfigure}[b]{0.4\textwidth}
    \includegraphics[width=\linewidth]{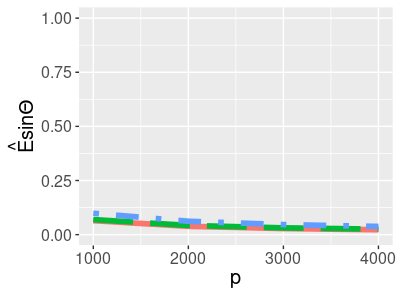}
    \caption{$\xv \sim t_3$, $\lambda_1,\lambda_2 \approx \sqrt{p} $} \label{subfig:numeric_t_half}
  \end{subfigure}\\
  \begin{subfigure}[b]{0.4\textwidth}
    \centering
    \includegraphics[width=\linewidth]{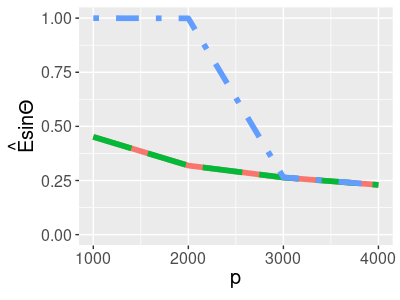}
    \caption{$\xv \sim N$} \label{subfig:numeric_n0}
  \end{subfigure}
  \begin{subfigure}[b]{0.4\textwidth}
    \centering 
    \includegraphics[width=\linewidth]{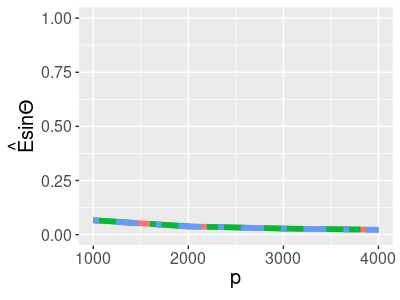}
    \caption{$\xv \sim N$, $\lambda_1,\lambda_2 \approx \sqrt{p} $} \label{subfig:numeric_n_half}
  \end{subfigure}
\caption{Empirical expectation $\widehat{E}[\sin \Theta_{\epsilon}^{(r)}]$ for different tail behaviors. Panels (a) and (c) show the results under non-spiked model with the $t_3$ and Gaussian distributions, respectively.    
Panels (b) and (d) represent the spiked model. }\label{fig:numeric_high_p}
\end{figure}

Figure~\ref{subfig:numeric_t0} displays the loss (empirical expectation $\widehat{E}[\sin \Theta_{\epsilon}^{(r)}]$) from the heavy-tailed $t_3$ distribution for which the eigenvalues of covariance matrix $\Sigmav$ are constant. 
As one can expect from \eqref{eq:tail-nongaussian_asy}, the losses 
decrease as $p$ grows. For this heavy-tailed distribution, smaller winsorization radius provides better accuracy. 


We next use a spiked covariance model for $\Sigmav$ where the first $d$ eigenvalues scale with $\sqrt{p}$. As shown in Figures~\ref{subfig:numeric_t_half}, the losses for all radii are smaller than those in the non-spiked model, and tend to zero. This is due to the higher signal-to-noise ratio inherent in the spiked model. 


The bottom panels of Figure~\ref{fig:numeric_high_p} correspond to the
Gaussian distribution. In these light-tailed cases,  the winsorization with moderate radius works as good as that with small radius. 

Note that when the larger radius ($r_{3k}$) is used, and for lower $p$, the outlier adversely affects the subspace estimates. 
For high $p$, since the magnitude of the outlier becomes larger than the winsorization radius, winsorization is effective, as can be inspected from Figures~\ref{fig:numeric_high_p}(a) and (c). 

\section{ROBUSTNESS OF WINSORIZED PCA 
}\label{sec:robustness}
Recall that $\Xv_0$ represents the uncontaminated data, and $\Xv_\epsilon$ is the contaminated dataset. 
In this section, we investigate two aspects of robustness: subspace breakdown points and perturbation bounds.

\subsection{Breakdown Point Analysis}
\subsubsection{Breakdown Points for Real-valued Statistics}
We focus on the concept of breakdown points \citep{hampelContributionsTheoryRobust1968a,bickelFestschriftErichLehmann1982a,huberFiniteSampleBreakdown1984a,huberRobustStatistics2011a}, which measure the robustness of a statistic against corrupted data. The breakdown point of a statistic is the minimum proportion of corrupted data required to make the statistic ``break down." For instance, the sample mean has a breakdown point of $1/n$, meaning a single outlier can drastically affect it, while the sample median, with a breakdown point of $1/2$, is more robust. Formally, the breakdown point of a real-valued statistic $f: \Xc^n \rightarrow \Rb$ at an $n$-sample $\Xv_0 \in \Xc^n$ is defined as
\begin{align}\label{eq:bp_scalar}
 \bp(f;\Xv_0) := \min_{1 \leq l \leq n}\{\frac{l}{n} : \sup_{\Zv_{l}}|f(\Zv_l) - f(\Xv_0)| = \infty\},
\end{align}
where the supremum is taken over the collection of all possible corrupted data $\Zv_l$, obtained by replacing $l$ data points in  $\Xv_0$ with arbitrary values.
A breakdown point $\bp(\Xv_0; f) = m/n$ represents a threshold of resistance, meaning that the statistic $f$ will not break down as long as the proportion of corruption does not exceed $m/n$. 

For the cases where $f(\Xv_0) \in \mathbb{R}^d$, many researchers have used a global dissimilarity measure in determining the breakdown of $f$ \citep{hubertHighBreakdownRobustMultivariate2008a,lopuhaaBreakdownPointsAffine1991b,beckerMaskingBreakdownPoint1999a,heRobustDirectionEstimation1992a}.
Typically, $|f(\Zv_l) - f(\Xv_0)|$ in \eqref{eq:bp_scalar} is replaced with $D(f(\Zv_l) - f(\Xv_0))$, where $D$ is the metric that quantifies the dissimilarity between vector-valued estimates. However, the breakdown of a multivariate estimator does not imply that all components break down simultaneously. 
As an instance, consider the five-number summary, $f(\Xv_0) = [Q_0, Q_1, Q_2, Q_3, Q_4]$, where $Q_0$ through $Q_4$ are the minimum, quartiles, and maximum. 
The breakdown point of $f$ is $1/n$ when using $D_{\Rb^5}(f(\Xv_0), f(\Zv_l)) = \|f(\Xv_0) - f(\Zv_l)\|_2$, because the minimum and maximum are sensitive to a single outlier. However, the median $Q_2$ has a breakdown point of $1/2$. 
To focus on $Q_2$ alone, we can use a modified dissimilarity function $\overline{D}_{\Rb^5}$ on $\Rb^5 \times \Rb^5$, defined as $\overline{D}_{\Rb^5}(f_1, f_2) = |f_{13} - f_{23}|$, where $f_i = (f_{i1}, \dots, f_{i5})'$,  
 with $f_{i3}$ representing the median component. By replacing $D_{\Rb^5}$ with $\overline{D}_{\Rb^5}$, the breakdown point increases to approximately $1/2$. 
 In the next section, we introduce a new notion of \textit{strong breakdown } to explain these different types of breakdown. 
 
 
 

\subsubsection{
Strong Breakdown
}\label{sec:strong_breakdown}
Consider a space $\Dc$ and a statistic $f : \Xc^n \rightarrow \Dc$. We measure the dissimilarity between $f(\Xv_0)$ and $f(\Zv_l)$ using a dissimilarity function $D: \Dc \times \Dc \rightarrow [0, \infty)$ satisfying $D(f, f) = 0$ for all $f \in \Dc$, and $\sup D > 0$. Note that the dissimilarity function $D$ may have two distinct elements $f_1 \neq f_2$ satisfying $D(f_1, f_2) = 0$. 
In the five-number summary example, $D_{\Rb^5}$ and $\overline{D}_{\Rb^5}$  are different dissimilarity functions on $\Rb^5$. 
The breakdown point of $f$ with respect to the dissimilarity function $D$ at $\Xv_0 \in \Xc^n$ is defined as
\begin{align}\label{eq:bp_multi}
 \bp(f,D;\Xv_0) := \min_{1 \leq l \leq n}\{\frac{l}{n} : \sup_{\Zv_{l}}D(f(\Zv_l), f(\Xv_0)) = \infty_D\}.
\end{align}
where $\infty_D := \sup_{f_1, f_2 \in \Dc} D(f_1, f_2)$ represents the maximal possible dissimilarity.


For two dissimilarity functions $D$ and $\overline{D}$ on $\Dc$, we say that $\overline{D}$ is weaker than $D$ (denoted $\overline{D} \preceq D$) if $\lim_{k \to \infty} D(f_{1k}, f_{2k}) = \infty_D$ for any two sequences ${f_{1k}}, {f_{2k}} \in \Dc$ satisfying $\lim_{k \to \infty} \overline{D}(f_{1k}, f_{2k}) = \infty_{\overline{D}}$. 
Simply put, if $\overline{D}(f,g) = \infty_{\overline{D}}$ gives  ${D}(f,g) = \infty_{{D}}$, then  $\overline{D} \preceq D$.
%
%
%
This relation implies that $\overline{D}$ is less sensitive to breakdown (reaching the maximal difference) than $D$. For any $\Xv_0 \in \Xc^n$ and statistic $f: \Xc^n \rightarrow \Dc$, if $\overline{D} \preceq D$, then:
\[
\bp(f,\overline{D};\Xv_0) \geq \bp(f,D;\Xv_0).
\]
This result indicates that using a weaker dissimilarity function leads to a higher (stronger) breakdown point. In practice, this means that an estimator may appear more robust when assessed with respect to a weaker dissimilarity function, focusing on specific components or aspects of the estimator.  Given two dissimilarity functions $\overline{D} \preceq D$, we define strong breakdown point as follows:
\begin{defn}\label{def:strong_bp}
    For two given breakdown points, $\bp(f, \overline{D}; \cdot)$ and $\bp(f, D; \cdot)$, with $\overline{D} \preceq D$, we say that $\bp(f, \overline{D}; \cdot)$ is the strong breakdown point and $\bp(f, D; \cdot)$ is the (weaker) breakdown point.
\end{defn}

\subsubsection{Breakdown Points for Subspace-valued statistics}
Our interest lies in PC subspaces, thus we extend the notion of breakdown points to subspace-valued statistics using the largest and the smallest principal angles, as dissimilarity functions, on the Grassmannian manifold $\gr(d,p)$, the set of all $d$-dimensional linear subspaces in $\Rb^p$. Let $\Vc : \Rb^{n \times p} \rightarrow \gr(d,p)$ be the subspace-valued statistic of interest. An example is the $d$-dimensional PC subspace derived from data $\Xv_0$. 
The smallest and the largest principal angles defined in \eqref{eq:min_principal_angle} and \eqref{eq:principal_angle} are denoted by $\theta(\cdot,\cdot)$ and $\Theta(\cdot,\cdot)$, respectively. 

\begin{defn}
\label{def:strongbp_subspace}
For $\Vc \in \gr(d,p)$ and $\Xv_0 \in \Rb^{n\times p}$, the breakdown point of $\Vc \in \gr(d,p)$ at $\Xv_0$ is 
    \begin{align}\label{eq:bp_space}
    \begin{split}
 \bp(\Vc;&\Xv_0) := \bp(\Vc, \Theta;\Xv_0)\\ 
 &= \min_{1 \leq l \leq n}\{\frac{l}{n} : \sup_{\Zv_{l}}\Theta(\Vc(\Zv_l),\Vc(\Xv_0)) = \frac{\pi}{2} \}
 \end{split}
 \end{align}
 and the strong breakdown point of $\Vc: \Rb^{n\times p } \rightarrow \gr(d,p)$ at $\Xv_0$ is
 \begin{align}\label{eq:strong_bp_space}
 \begin{split}
 \overline{\bp}(\Vc;&\Xv_0) := \bp(\Vc, \theta;\Xv_0)  \\
 &= \min_{1 \leq l \leq n}\{\frac{l}{n} : \sup_{\Zv_{l}}\theta(\Vc(\Zv_l),\Vc(\Xv_0)) = \frac{\pi}{2} \}.
 \end{split}
\end{align}
\end{defn}
The breakdown point \eqref{eq:bp_space} was proposed in \cite{hanRobustSVDMade2024b}. 
 For $d \leq p/2$, the strong breakdown point $\overline{\bp}(\Vc;\Xv_0)$ is always greater than or equal to the breakdown point $\bp(\Vc;\Xv_0)$, since $\theta \preceq \Theta$. When $d > p/2$, 
any two $d$-dimensional subspaces must intersect, and strong breakdown never occurs. (In fact, (\ref{eq:strong_bp_space}) is ill-defined for this case.)  
%
Strong breakdown implies that the subspace derived from contaminated data becomes fully orthogonal to the subspace obtained from uncontaminated data. In contrast, weak breakdown occurs when contaminated subspace is only partially orthogonal to its uncontaminated counterpart.

\subsection{Breakdown Points of Winsorized PCA}\label{sec:breakdown_winsor}
In this section, we begin by examining the lack of robustness of traditional PCA, clarifying the breakdown and strong breakdown points of $d$-dimensional PC subspaces. 

\begin{thm}\label{thm:breakdown_PCA}
Let $\Vc_d(\Xv_0)$ be given by the $d$-dimensional PC subspace obtained from traditional PCA applied to the data $\Xv_0$, and $\widehat{\lambda}_j$ be the $j$th largest eigenvalue of $\Xv_0'\Xv_0/n$. Assume that $\widehat{\lambda}_d > \widehat{\lambda}_{d+1}$. Then,
\[
 \bp(\Vc_d;\Xv_0) = \frac{1}{n},\text{ and } \overline{\bp}(\Vc_d ;\Xv_0)  = \frac{d}{n}.
\]
\end{thm}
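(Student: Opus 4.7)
The plan is to handle the weak and strong breakdown points separately, with the lower bound on $\overline{\bp}$ doing most of the work. Throughout I write $\Sv_\epsilon = \Sv_0 + \Cv - \Cv'$, where $\Cv = \tfrac{1}{n}\sum_{i \in \Ic_\epsilon}\zv_i\zv_i'$ and $\Cv' = \tfrac{1}{n}\sum_{i \in \Ic_\epsilon}\xv_i\xv_i'$ are positive semidefinite of rank at most $l$, arising from the $l$ added and removed observations respectively.

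The inequality $\bp \geq 1/n$ is trivial. For $\bp \leq 1/n$, replace one observation by $\zv = M\uv$ with $\uv \in \Vc_d(\Xv_0)^\perp$ a unit vector: as $M \to \infty$ the leading eigenvector of $\Sv_\epsilon$ tends to $\uv$, so $\Theta \to \pi/2$. The analogous construction with $d$ mutually orthogonal unit vectors $\uv_1, \dots, \uv_d \in \Vc_d(\Xv_0)^\perp$ (available because $p - d \geq d$) shows $\overline{\bp} \leq d/n$: as all outlier magnitudes diverge, the top-$d$ eigenspace of $\Sv_\epsilon$ converges to $\mathrm{span}(\uv_1, \dots, \uv_d) \subset \Vc_d(\Xv_0)^\perp$, forcing every principal angle to $\pi/2$.

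For the lower bound $\overline{\bp} \geq d/n$ I argue by contradiction: assume $\Vc_d(\Xv_\epsilon) \perp \Vc_d(\Xv_0)$ for some $\Zv_l$ with $l \leq d - 1$. Working in the eigenbasis of $\Sv_0$, let $\Vv_d, \Vv_\perp$ be orthonormal bases of $\Vc_d(\Xv_0)$ and $\Vc_d(\Xv_0)^\perp$. Two applications of the standard rank-$l$ PSD eigenvalue interlacing inequalities -- $\lambda_j(D + \Rv) \leq \lambda_{j-l}(D)$ for $j > l$, and $\lambda_j(D - \Rv) \geq \lambda_{j+l}(D)$ -- yield
\begin{equation*}
\lambda_1(\Vv_d'\Sv_\epsilon\Vv_d) \geq \widehat{\lambda}_{l+1}
\quad\text{and}\quad
\lambda_d(\Vv_\perp'\Sv_\epsilon\Vv_\perp) \leq \widehat{\lambda}_{2d - l},
\end{equation*}
on applying them to $\Vv_d'\Sv_\epsilon\Vv_d \succeq \diag(\widehat{\lambda}_1, \dots, \widehat{\lambda}_d) - \Vv_d'\Cv'\Vv_d$ (rank-$l$ PSD subtraction from a diagonal) and to $\Vv_\perp'\Sv_\epsilon\Vv_\perp = \diag(\widehat{\lambda}_{d+1}, \dots, \widehat{\lambda}_p) + \Vv_\perp'\Cv\Vv_\perp - \Vv_\perp'\Cv'\Vv_\perp$ (rank-$l$ PSD addition then subtraction). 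The strong breakdown assumption chains these bounds: the top-$d$ eigenvectors of $\Sv_\epsilon$ lie entirely in $\Vc_d(\Xv_0)^\perp$, so they are also the top-$d$ eigenvectors of $\Vv_\perp'\Sv_\epsilon\Vv_\perp$ and $\lambda_d(\Sv_\epsilon) = \lambda_d(\Vv_\perp'\Sv_\epsilon\Vv_\perp)$; moreover every unit $\vv \in \Vc_d(\Xv_0)$ is orthogonal to $\Vc_d(\Xv_\epsilon)$, so $\vv'\Sv_\epsilon\vv \leq \lambda_{d+1}(\Sv_\epsilon) \leq \lambda_d(\Sv_\epsilon)$. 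Combining, $\widehat{\lambda}_{l+1} \leq \widehat{\lambda}_{2d - l}$; but for $l \leq d - 1$ the eigengap forces $\widehat{\lambda}_{l+1} \geq \widehat{\lambda}_d > \widehat{\lambda}_{d+1} \geq \widehat{\lambda}_{2d-l}$, a contradiction. Sequences $\Zv_l^{(k)}$ with $\theta^{(k)} \to \pi/2$ only in the limit are reduced to this case by passing to subsequential limits after rescaling any diverging outlier directions and applying the same inequalities to the effective limit operator.

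The main obstacle is closing a factor-of-two gap. A naive application of rank-$l$ interlacing globally to $\Sv_\epsilon = \tilde{\Sv} + \Cv$ only yields $\lambda_{d+1}(\Sv_\epsilon) \leq \widehat{\lambda}_{d+1-l}$, which forces merely $l \geq d/2$. The crucial trick is that strong breakdown identifies the top-$d$ eigenvalues of $\Sv_\epsilon$ with those of its restriction to $\Vc_d(\Xv_0)^\perp$; applying the rank-$l$ bound to this $(p-d)$-dimensional restriction sharpens $\widehat{\lambda}_{d+1-l}$ to $\widehat{\lambda}_{2d-l}$, just tight enough to force $l \geq d$.
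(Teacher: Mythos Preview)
Your interlacing argument for the \emph{exact} case ($\Vc_d(\Xv_\epsilon) \perp \Vc_d(\Xv_0)$) is correct and elegant, and it is a genuinely different route from the paper's: the two outlier-independent bounds $\lambda_1(\Vv_d'\Sv_\epsilon\Vv_d) \geq \widehat{\lambda}_{l+1}$ and $\lambda_d(\Vv_\perp'\Sv_\epsilon\Vv_\perp) \leq \widehat{\lambda}_{2d-l}$, chained through $\lambda_d(\Sv_\epsilon)$, give the contradiction cleanly. The gap is the approximate case. The strong breakdown point is defined via a supremum that need not be attained, so you must rule out sequences $\Zv_l^{(k)}$ with $\theta_1^{(k)} \to \pi/2$ but $\theta_1^{(k)} < \pi/2$ for every $k$. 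Your chain invokes exact orthogonality in two places---$\lambda_1(\Vv_d'\Sv_\epsilon\Vv_d) \leq \lambda_{d+1}(\Sv_\epsilon)$ and $\lambda_d(\Sv_\epsilon) = \lambda_d(\Vv_\perp'\Sv_\epsilon\Vv_\perp)$---and under $\delta$-approximate orthogonality each incurs an error of order $\sin^2\delta \cdot \lambda_1(\Sv_\epsilon)$, which is unbounded because the outliers are unbounded. The ``subsequential limit after rescaling'' sketch does not resolve this: rescaling $\Sv_\epsilon^{(k)}$ by its norm destroys the fixed lower bound $\widehat{\lambda}_{l+1}$, while without rescaling there is no limit operator to which your exact argument applies.

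The paper sidesteps the unboundedness by never letting the outliers enter a quadratic form. Since $l \leq d-1$, one can choose a unit $\uv_0 \in \Vc_d(\Zv_l)$ orthogonal to all $l$ outliers $\zv_i$ and a unit $\vv_0 \in \Vc_d(\Xv_0)$ orthogonal to all $l$ removed observations $\xv_i$. These choices give $\uv_0'\Zv_l'\Zv_l\uv_0 = \uv_0'\Xv_-'\Xv_-\uv_0$ and $\vv_0'\Xv_0'\Xv_0\vv_0 = \vv_0'\Xv_-'\Xv_-\vv_0$, where $\Xv_-$ is the common block of retained rows; every operator norm that subsequently appears is bounded by the fixed constant $n\widehat{\lambda}_1$. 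Working under $\epsilon_0$-approximate orthogonality, this yields $\widetilde{\lambda}_d \leq \widehat{\lambda}_{d+1} + o(1)$ and $\widetilde{\lambda}_{d+1} \geq \widehat{\lambda}_d - o(1)$ as $\epsilon_0 \to 0$, so $\widetilde{\lambda}_d < \widetilde{\lambda}_{d+1}$ for small $\epsilon_0$---the required contradiction. This ``route through $\Xv_-$'' is the idea your argument is missing; your interlacing inequalities could likely be salvaged by grafting it on, but as written the limit case is not established.
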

This theorem highlights that traditional PCA is highly sensitive to outliers. A single outlier can significantly impact the estimation of the PC subspace, as reflected by the low breakdown point $\bp(\Vc_d ; \Xv_0) = 1/n$. The strong breakdown point $\overline{\bp}(\Vc_d ; \Xv_0)$ increases with the dimension $d$ of the subspace. However, since the dimension $d$ is typically much smaller than the number of samples $n$, even a small fraction of contaminated data can cause substantial distortion.

We provide lower bounds for the breakdown points of winsorized PC subspaces, indicating the robustness of WPCA compared to traditional PCA.
\begin{thm}\label{thm:breakdown}
Let $\Vc_d^{(r)}$ be a $d$-dimensional PC subspace from WPCA. Then,
\begin{align}\label{eq:bp_win}
    \begin{split}
        \bp(\Vc_d^{(r)} ;\Xv_0) 
    &\geq \frac{1}{2r^2} (\widehat{\lambda}_d^{(r)}- \widehat{\lambda}_{d+1}^{(r)}), \\
    \overline{\bp}(\Vc_d^{(r)} ;\Xv_0) 
    &\geq \sup_{d_0\leq d} \frac{\sum_{j=1}^{d_0}\widehat{\lambda}_j^{(r)} - \sum_{j=1}^{d_0}\widehat{\lambda}_{d+j}^{(r)}}{2r^2d_0}.
    \end{split}
\end{align}
\end{thm}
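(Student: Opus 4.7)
The plan is to derive both bounds by (i) controlling how much the winsorized sample covariance changes when $l$ observations are replaced---exploiting the bound $\|\xv_i^{(r)}\|_2 \leq r$---and then (ii) converting a hypothetical principal-angle breakdown into an impossible eigenvalue inequality via the Courant--Fischer and Ky Fan variational characterizations. Writing $\Sv_0 = (\Xv_0^{(r)})^\prime \Xv_0^{(r)}/n$ and $\Sv_l = (\Zv_l^{(r)})^\prime \Zv_l^{(r)}/n$ for the winsorized sample covariances of the clean and adversarially contaminated data, the key input is the one-sided trace bound, valid for any orthonormal $V \in \Rb^{p \times d_0}$,
\[
  \tr(V^\prime \Sv_0 V) - \tr(V^\prime \Sv_l V) = \frac{1}{n}\sum_{i\in \Ic_\epsilon}\bigl[\|V^\prime \xv_{0,i}^{(r)}\|_2^2 - \|V^\prime \zv_i^{(r)}\|_2^2\bigr] \leq \frac{lr^2}{n},
\]
obtained from $\|V^\prime \xv^{(r)}\|_2 \leq \|\xv^{(r)}\|_2 \leq r$ and $\|V^\prime \zv^{(r)}\|_2^2 \geq 0$, together with the symmetric estimate that swaps $\Sv_0$ and $\Sv_l$.

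For the weak breakdown bound, assume $\Theta(\Vc_d^{(r)}(\Zv_l),\Vc_d^{(r)}(\Xv_0)) = \pi/2$. The largest principal angle being $\pi/2$ yields a unit $\uv\in \Vc_d^{(r)}(\Xv_0)$ with $\uv \perp \Vc_d^{(r)}(\Zv_l)$ and, by symmetry, a unit $\vv \in \Vc_d^{(r)}(\Zv_l)$ with $\vv \perp \Vc_d^{(r)}(\Xv_0)$. By Courant--Fischer, $\uv^\prime \Sv_0 \uv \geq \widehat{\lambda}_d^{(r)}(\Sv_0)$ (since $\uv$ lies in the top-$d$ eigenspace of $\Sv_0$) and $\uv^\prime \Sv_l \uv \leq \widehat{\lambda}_{d+1}^{(r)}(\Sv_l)$ (since $\uv$ lies in the bottom-$(p-d)$ eigenspace of $\Sv_l$), with analogous bounds for $\vv$. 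Applying the $d_0 = 1$ trace inequality and its swap gives $\widehat{\lambda}_d^{(r)}(\Sv_0) - \widehat{\lambda}_{d+1}^{(r)}(\Sv_l) \leq lr^2/n$ and $\widehat{\lambda}_d^{(r)}(\Sv_l) - \widehat{\lambda}_{d+1}^{(r)}(\Sv_0) \leq lr^2/n$. Summing and regrouping produces $[\widehat{\lambda}_d^{(r)}(\Sv_0) - \widehat{\lambda}_{d+1}^{(r)}(\Sv_0)] + [\widehat{\lambda}_d^{(r)}(\Sv_l) - \widehat{\lambda}_{d+1}^{(r)}(\Sv_l)] \leq 2lr^2/n$, and since each bracket is nonnegative the $\Sv_0$-bracket is itself at most $2lr^2/n$, which rearranges to the first inequality of~\eqref{eq:bp_win}.

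For the strong breakdown bound, fix $d_0 \leq d$ and assume $\theta(\Vc_d^{(r)}(\Zv_l), \Vc_d^{(r)}(\Xv_0)) = \pi/2$, so the two top-$d$ subspaces are fully orthogonal. Let $\Vv_{d_0}$ and $\Wv_{d_0}$ be orthonormal bases of the top-$d_0$ eigenspaces of $\Sv_0$ and $\Sv_l$, respectively. The range of $\Vv_{d_0}$ lies in $\Vc_d^{(r)}(\Xv_0)$ and is therefore contained in the bottom-$(p-d)$ eigenspace of $\Sv_l$; Ky Fan's maximum principle applied to $\Sv_l$ restricted to that subspace gives $\tr(\Vv_{d_0}^\prime \Sv_l \Vv_{d_0}) \leq \sum_{j=1}^{d_0}\widehat{\lambda}_{d+j}^{(r)}(\Sv_l)$, and the analogous estimate holds for $\Wv_{d_0}$. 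Combined with the trace inequality and its swap, this yields
\[
  \sum_{j=1}^{d_0}\widehat{\lambda}_j^{(r)}(\Sv_0) - \sum_{j=1}^{d_0}\widehat{\lambda}_{d+j}^{(r)}(\Sv_l) \leq \frac{lr^2}{n}
\]
together with the same inequality with $\Sv_0$ and $\Sv_l$ interchanged; adding the two and discarding the nonnegative $\Sv_l$-eigenvalue-gap bracket leaves $\sum_{j=1}^{d_0}(\widehat{\lambda}_j^{(r)} - \widehat{\lambda}_{d+j}^{(r)}) \leq 2 lr^2/n$, so that $l/n \geq \sum_{j=1}^{d_0}(\widehat{\lambda}_j^{(r)} - \widehat{\lambda}_{d+j}^{(r)})/(2r^2)$; taking the supremum over $d_0 \in \{1,\dots,d\}$ yields the second inequality of~\eqref{eq:bp_win}.

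The main subtlety is the symmetric ``cross''-pairing step: a one-sided trace bound alone would leave $\Sv_l$-dependent eigenvalues on the right-hand side, which the adversary could in principle inflate by placing outliers far from the origin. Pairing each bound with its swap causes those $\Sv_l$-terms to regroup into a second nonnegative eigenvalue-gap bracket that can simply be discarded, so the final inequality involves only the uncontaminated eigenvalues $\widehat{\lambda}_j^{(r)}$---which is what makes the breakdown lower bound computable a priori from $\Xv_0$ alone.
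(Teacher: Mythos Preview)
Your argument is correct in substance and follows a genuinely different route from the paper's. For the weak breakdown bound, the paper simply invokes the perturbation inequality of Theorem~\ref{thm:perturbation} (itself a Davis--Kahan variant), whereas you argue directly with Rayleigh quotients and the symmetric cross-pairing of $\uv$ and $\vv$. For the strong breakdown bound, the paper runs a one-sided argument---working only with the contaminated eigenvectors $\vv_{j,\epsilon}$, then applying Weyl's inequality and Von Neumann's trace inequality---while you again pair both directions and use Ky Fan. Your route is more elementary (no Davis--Kahan needed) and in fact delivers more: because your trace bound $\tr(V'\Sv_0 V) - \tr(V'\Sv_l V)\leq lr^2/n$ carries no factor of $d_0$, your chain of inequalities actually yields $\overline{\bp}(\Vc_d^{(r)};\Xv_0) \geq \sum_{j=1}^{d_0}(\widehat{\lambda}_j^{(r)} - \widehat{\lambda}_{d+j}^{(r)})/(2r^2)$ \emph{without} the $d_0$ in the denominator, strictly sharper than the stated bound; the paper bounds each $j$ separately before summing and thereby picks up that extra $d_0$. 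One minor gap to patch: the breakdown definitions use a supremum over contaminations $\Zv_l$ that need not be attained, so you cannot assume exact orthogonality $\Theta=\pi/2$ (resp.\ $\theta=\pi/2$); instead take a sequence with angle tending to $\pi/2$ and pass to the limit---the paper handles this via an auxiliary $\epsilon_0\to 0$, and your inequalities are stable under that approximation.
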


The lower bounds in \eqref{eq:bp_win} are less than or equal to $\frac{1}{2}$, as $\widehat{\lambda}_j^{(r)} \leq r^2$ for all $j=1,\dots,p$. 

%
%

\begin{figure}[t]
\centering
    \includegraphics[width=0.8\linewidth]{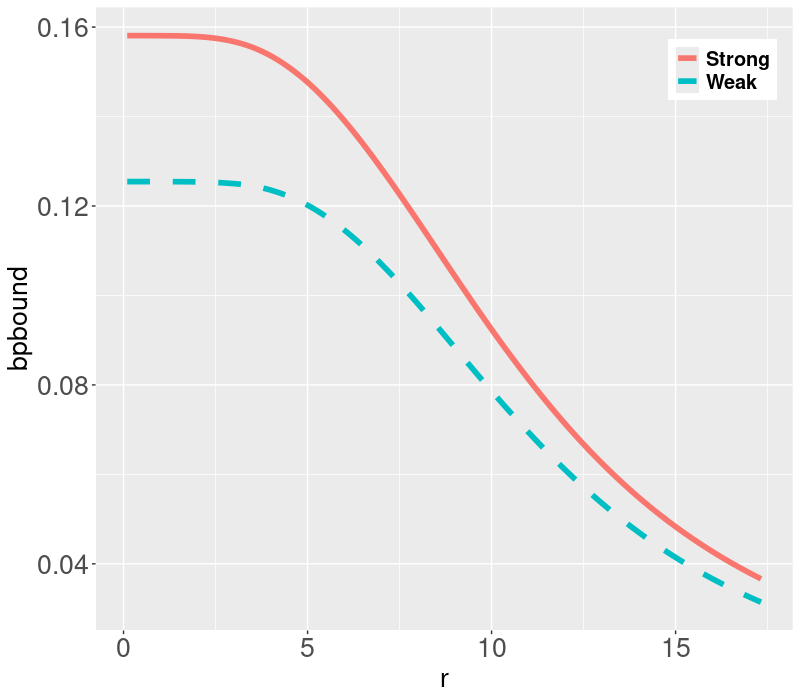}
    \caption{Estimated lower bounds for the breakdown points in \eqref{eq:bp_win}. 
    }\label{fig:lower_bdpt}
\end{figure}

We empirically observe that the smaller the radius $r$, the more robust the winsorized PC subspace becomes  in terms of both strong and weak breakdown points. 
%
%
Figure~\ref{fig:lower_bdpt} illustrates how the breakdown points vary as the winsorization radius $r$ varies. The lower bounds of each breakdown point decrease as $r$ increases. We observe that, for every $r$, the lower bound for the strong breakdown point is larger than that for the (weak) breakdown point. Moreover, the gap between the lower bounds becomes larger as $r$ decreases. All in all, WPCA appears to be less robust to contamination when using a larger winsorization radius, breaking down at lower contamination levels.

\subsection{Subspace 
Perturbation Bound
}\label{sec:pertur}
The notion of breakdown examines only the extreme cases in which the dissimilarity is maximized. Thus, there may be cases under which breakdown does not occur but the quality of the statistics from contaminated data is low.   
%
%
%
%
To inspect the amount of deviation of $\Vc_d(\Xv_\epsilon)$ from $\Vc_d(\Xv_0)$, 
we establish the subspace perturbation bound
using the largest principal angle.


\begin{thm}\label{thm:perturbation}
Let $\widehat{\lambda}_j^{(r)}$ be the $j$th largest eigenvalue of $\frac{1}{n}\Xv_0^{(r)\prime}\Xv_0^{(r)}$, and $\widehat{\Theta}_{\epsilon}^{(r)} = \Theta\left(\Vc_d^{(r)}(\Xv_{\epsilon}), \Vc_d^{(r)}(\Xv_0)\right)$ be the largest principal angle between $\Vc_d^{(r)}(\Xv_{\epsilon})$ and $\Vc_d^{(r)}(\Xv_0)$. 
If $\widehat{\lambda}_d^{(r)}- \widehat{\lambda}_{d+1}^{(r)} > 0$, then
\begin{align}\label{eq:upperbound1}
    \sin \widehat{\Theta}_{\epsilon}^{(r)} \leq \frac{2r^2 \epsilon}{\widehat{\lambda}_d^{(r)}- \widehat{\lambda}_{d+1}^{(r)}}.
    \end{align}
    Additionally, if $\widehat{\lambda}_d^{(r)}- \widehat{\lambda}_{d+1}^{(r)} > 4r^2\epsilon$, then
    \begin{align}\label{eq:upperbound2}
        \sin \widehat{\Theta}_{\epsilon}^{(r)} &\leq \frac{r^2 \epsilon}{\widehat{\lambda}_d^{(r)}- \widehat{\lambda}_{d+1}^{(r)} - 2r^2 \epsilon}.
    \end{align}
\end{thm}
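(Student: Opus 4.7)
The plan is to reduce the statement to a spectral perturbation estimate between the two winsorized sample covariance matrices and invoke the Davis-Kahan $\sin\Theta$ theorem. Set
\[
\widehat{\Sigma}_0^{(r)} := \tfrac{1}{n}(\Xv_0^{(r)})'\Xv_0^{(r)},\qquad \widehat{\Sigma}_\epsilon^{(r)} := \tfrac{1}{n}(\Xv_\epsilon^{(r)})'\Xv_\epsilon^{(r)},
\]
so that $\Vc_d^{(r)}(\Xv_0)$ and $\Vc_d^{(r)}(\Xv_\epsilon)$ are the top-$d$ invariant subspaces of these two matrices and $\widehat{\lambda}_j^{(r)}$ are the eigenvalues of $\widehat{\Sigma}_0^{(r)}$. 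The core observation driving both bounds is that winsorization caps every row: $\|\xv_i^{(r)}\|_2 \leq r$, so each rank-one summand satisfies $\|\xv_i^{(r)}(\xv_i^{(r)})'\|_{\rm op}\leq r^2$.

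First I would control the perturbation $E := \widehat{\Sigma}_\epsilon^{(r)} - \widehat{\Sigma}_0^{(r)}$. Since the two datasets coincide off the index set $\Ic_\epsilon$, only $|\Ic_\epsilon|=\epsilon n$ terms survive:
\[
E = \frac{1}{n}\sum_{i\in\Ic_\epsilon}\bigl[\xv_{i,\epsilon}^{(r)}(\xv_{i,\epsilon}^{(r)})' - \xv_{i,0}^{(r)}(\xv_{i,0}^{(r)})'\bigr] = E_+ - E_-,
\]
where $E_\pm\succeq 0$ and $\|E_\pm\|_{\rm op}\leq \tfrac{1}{n}\cdot\epsilon n\cdot r^2 = r^2\epsilon$. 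The triangle inequality then yields $\|E\|_{\rm op}\leq 2r^2\epsilon$. Applying the Davis-Kahan $\sin\Theta$ theorem with the unperturbed eigengap $g := \widehat{\lambda}_d^{(r)}-\widehat{\lambda}_{d+1}^{(r)}>0$ gives immediately
\[
\sin\widehat{\Theta}_\epsilon^{(r)} \;\leq\; \frac{\|E\|_{\rm op}}{g} \;\leq\; \frac{2r^2\epsilon}{g},
\]
which is \eqref{eq:upperbound1}.

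For \eqref{eq:upperbound2} I would replace the unperturbed gap by a perturbed one using Weyl's inequality exploited through the PSD decomposition $E=E_+-E_-$. Writing $\widetilde{\lambda}_j$ for the eigenvalues of $\widehat{\Sigma}_\epsilon^{(r)}$, monotonicity under adding/subtracting PSD matrices gives
\[
\widetilde{\lambda}_d \;\geq\; \widehat{\lambda}_d^{(r)} - \|E_-\|_{\rm op} \;\geq\; \widehat{\lambda}_d^{(r)} - r^2\epsilon,\qquad \widetilde{\lambda}_{d+1} \;\leq\; \widehat{\lambda}_{d+1}^{(r)} + \|E_+\|_{\rm op} \;\leq\; \widehat{\lambda}_{d+1}^{(r)} + r^2\epsilon,
\]
so the perturbed gap satisfies $\widetilde{\lambda}_d-\widetilde{\lambda}_{d+1}\geq g-2r^2\epsilon$, which is strictly positive under the assumption $g>4r^2\epsilon$. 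Feeding this perturbed gap together with the sharper PSD-based bound $\|E\|_{\rm op}\leq r^2\epsilon$ into the Wedin form of Davis-Kahan yields
\[
\sin\widehat{\Theta}_\epsilon^{(r)} \;\leq\; \frac{\|E\|_{\rm op}}{g-2r^2\epsilon} \;\leq\; \frac{r^2\epsilon}{g-2r^2\epsilon},
\]
as claimed.

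The main obstacle is bookkeeping rather than conceptual: keeping track of which variant of Davis-Kahan (unperturbed-gap vs.\ perturbed-gap) to pair with which version of the operator norm bound ($2r^2\epsilon$ by triangle inequality vs.\ $r^2\epsilon$ exploiting the PSD decomposition), so that the constants in both \eqref{eq:upperbound1} and \eqref{eq:upperbound2} come out exactly as stated. The underlying spectral-perturbation argument itself is entirely standard; everything else is driven by the uniform norm bound $\|\xv_i^{(r)}\|_2\leq r$ provided by winsorization.
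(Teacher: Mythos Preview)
Your overall strategy is the right one, and for \eqref{eq:upperbound1} it is the same as the paper's. One fix on the constants, though: the bound $\sin\Theta\leq\|E\|/g$ with the purely \emph{unperturbed} gap $g=\widehat{\lambda}_d^{(r)}-\widehat{\lambda}_{d+1}^{(r)}$ is not a valid form of Davis--Kahan; the variant that uses only the unperturbed gap (Yu--Wang--Samworth 2015, restated as Lemma~\ref{lem:kahan} in the paper) carries a factor~$2$. So pairing your triangle-inequality estimate $\|E\|\leq 2r^2\epsilon$ with a factor-$1$ Davis--Kahan would actually yield $4r^2\epsilon/g$. The paper instead exploits the PSD decomposition you already wrote down: from $-r^2\epsilon\, I\preceq -E_-\preceq E\preceq E_+\preceq r^2\epsilon\, I$ one gets $\|E\|\leq r^2\epsilon$ directly (not merely $2r^2\epsilon$), and then the factor-$2$ Lemma~\ref{lem:kahan} delivers $2r^2\epsilon/g$. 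This is exactly the ``bookkeeping'' issue you anticipated.

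For \eqref{eq:upperbound2} your route and the paper's genuinely diverge. The paper does not go through the covariance-level Davis--Kahan here; it invokes the singular-vector perturbation bound of \cite{caiRateoptimalPerturbationBounds2018a}, working with $\sigma_d(\Xv_\epsilon^{(r)}\widehat{\Wv}_d)$, $\sigma_{d+1}(\Xv_\epsilon^{(r)})$, and the cross block $\widehat{\Wv}_\perp'(\Xv_\epsilon^{(r)})'\Xv_\epsilon^{(r)}\widehat{\Wv}_d$, and then bounding each piece via the $r^2$ cap. Your proposal---Weyl/monotonicity on $E=E_+-E_-$ to control the perturbed eigenvalues, followed by a $\sin\Theta$ bound---is more elementary and also reaches the target; in fact, the classical mixed-gap Davis--Kahan $\sin\Theta\leq\|E\|/(\widehat{\lambda}_d^{(r)}-\widetilde{\lambda}_{d+1})$ together with $\widetilde{\lambda}_{d+1}\leq\widehat{\lambda}_{d+1}^{(r)}+r^2\epsilon$ gives the slightly sharper $r^2\epsilon/(g-r^2\epsilon)$, from which \eqref{eq:upperbound2} follows a fortiori. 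The Cai--Zhang machinery buys nothing extra in this particular setting; your argument would be the cleaner proof.
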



The theorem establishes that for small values of $\epsilon$, the sine of the largest principal angle $\widehat{\Theta}_{\epsilon}^{(r)}$ can be bounded by either the linear bound \eqref{eq:upperbound1} or the rational bound \eqref{eq:upperbound2}. This implies that the winsorized PC subspace remains stable under minor contamination. However, it is important to note that the upper bound \eqref{eq:upperbound1} does not appear tight, as it grows linearly with the fraction of outliers. Similar statement can be made for SPCA. In particular, the bounds for SPCA are obtained by replacing 
${\widehat{\lambda}_j^{(r)}}/{r^2}$ with the $j$th largest eigenvalue of $\sum_{i=1}^n\xv_{i,0}\xv_{i,0}'/n\|\xv_{i,0}\|_2^2$.

To compare the perturbation bounds \eqref{eq:upperbound1} and \eqref{eq:upperbound2} in Theorem~\ref{thm:perturbation} and the lower bound of the breakdown point \eqref{eq:bp_win} in Theorem~\ref{thm:breakdown}, we use a data example. We fix the winsorization radius $r$ to be the median of the norms of the data points, i.e., $r = \mathrm{med}_i\{\|\xv_i\|\}$.

\begin{figure}[t]
\centering
    \includegraphics[width=\linewidth]{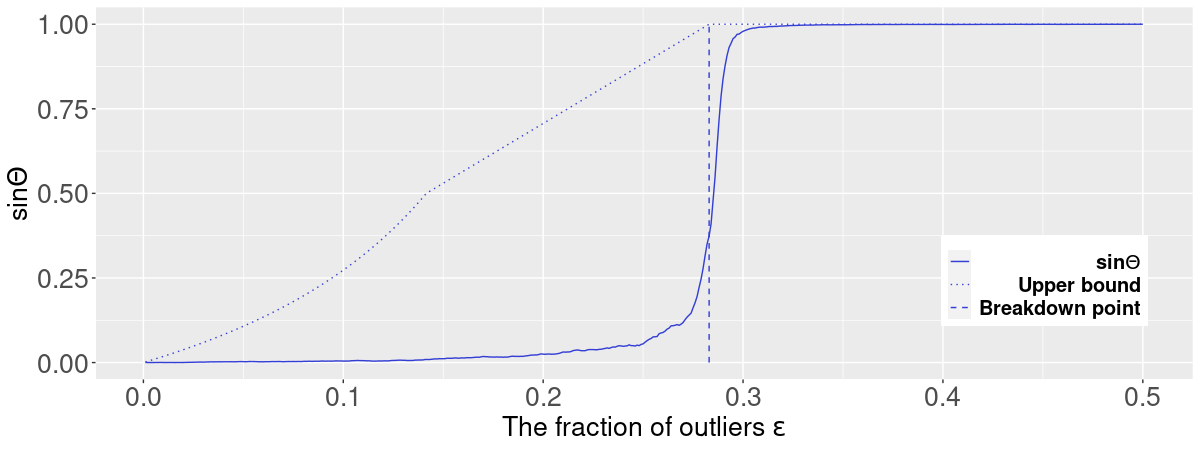}
    \caption{The largest principal angle $\Theta(\Vc_1^{(r)}(\Xv_\epsilon), \Vc_1^{(r)}(\Xv_0))$ and the perturbation bound versus contamination level $\epsilon$. 
    The solid line represents the observed largest principal angle, the dotted line represents the perturbation bound from Theorem~\ref{thm:perturbation}, and the vertical dashed line indicates the lower bound of the (weak) breakdown point from Theorem~\ref{thm:breakdown}.} \label{fig:rmed}
\end{figure}

For small contamination levels $\epsilon$, the perturbation bound closely follows the observed largest principal angle. This indicates that minor contamination leads to minor perturbation in the WPCA subspace, confirming the robustness of WPCA under small contamination. As $\epsilon$ increases, the perturbation bound becomes less sharp, overestimating the actual perturbation. The perturbation bound is conservative for larger contamination levels, suggesting that WPCA performs better in practice than the bound predicts.

In terms of the breakdown point, the principal angle remains relatively small until $\epsilon$ approaches the lower bound of the breakdown point in Theorem~\ref{thm:breakdown}. Once $\epsilon$ surpasses this breakdown point (indicated by the vertical dashed line), the principal angle rapidly increases towards $\pi/2$. The lower bound effectively predicts the actual breakdown point beyond which WPCA fails to recover the target subspace.

Consequently, WPCA demonstrates strong robustness to minor contamination, with the perturbation bound effectively predicting subspace deviation for small $\epsilon$. The breakdown point serves as a reliable threshold for subspace stability, effectively indicating when WPCA may fail to recover the target subspace. However, it is important to note that increasing the winsorization radius $r$ can reduce the robustness of WPCA, leading to higher perturbation bounds and lower breakdown points.


\section{CONCLUSION}
In terms of subspace recovery, our study demonstrates the accuracy of WPCA through concentration inequalities. We show that WPCA maintains consistency across a wide range of winsorization radii and performs well even in heavy-tailed distributions. 
Additionally, we demonstrate its consistency and scalability in high-dimensional settings with numerical examples.

Importantly, we introduce the concept of ``strong breakdown." Based on this concept, we reveal that WPCA exhibits higher resistance to contamination when compared to traditional PCA. However, we find that an excessively large winsorization radius negatively impacts subspace recovery, causing subspaces to diverge from the target, similar to the case of traditional PCA.

{
WPCA extends the applicability of traditional PCA by introducing robustness against anomalies. WPCA is most suited for the application areas that involve contaminated data. Examples include analyzing fMRI data for assessing brain connectivity \citep{lindquistStatisticalAnalysisFMRI2008a}, brain imaging visualization \citep{hanECAHighDimensionalElliptical2018b}, socio-economic studies for constructing indices like socio-economic status \citep{vyasConstructingSocioeconomicStatus2006a}. In system dynamics modeling, eigenvalue decomposition—closely related to PCA—serves as a key tool for extracting single-rate system dynamics from multirate sampled-data systems \cite{hanRobustSVDMade2024b}, in which robustness appears a valuable feature of the modeling.}

Future research could explore a spike model where population eigenvalues increase with the number of variables $p$. As observed in the numerical study in Section~\ref{sec:numeric}, we anticipate that the spiked eigenvalue model can be used to tighten the upper bounds in Theorem~\ref{thm:tail_ang} as $r$ and $p$ grow. Investigating the relationship between $r$, $p$, and eigenvalues in subspace recovery will provide valuable insights into the theoretical and practical applications of WPCA in high-dimensional settings.

\subsubsection*{Acknowledgements}
This research was supported by Basic Science Research Program through the National Research Foundation of Korea(NRF) funded by the Ministry of Education(RS-202400453397). This work was supported by Samsung Science and Technology Foundation under Project Number SSTF-BA2002-03.

\bibliographystyle{asa}
\bibliography{Winsor}

\clearpage
\appendix
\section{SUPPLEMENTARY MATERIAL}
\subsection{Technical details in Section~\ref{sec:stat_accuracy}}
\subsubsection{Covariance concentration and subguassian parametrization}
We provide concentration inequality for the sample covariance of a subgaussian random vector. We say that a random vector with zero mean $\xv \in \Rb^p$ is $\sigma$-subgaussian random vector if
\[
E[(\vv'\xv)^{2k}] \leq \frac{(2k)!}{2^k k!} \sigma^{2k}.
\]
for all $\vv \in S^{p-1}$ and $k=1,2,\dots$.

\begin{thm}[\cite{wainwrightHighDimensionalStatisticsNonAsymptotic2019b}]\label{thm:wainright}
    Let $\Xv = [\xv_1,\dots,\xv_n]' \in \Rb^{n\times p}$ be a data matrix whose rows are $i.i.d$ $\sigma$-subgaussian random vectors with zero mean and covariance matrix $\Sigmav$. Then the sample covariance $\widehat{\Sigmav} = \frac{1}{n}\Xv'\Xv$ satisfies the bound
    \[
        E[e^{\lambda\|\widehat{\Sigmav} - \Sigmav\|}] \leq e^{4p + 2^5 \frac{\lambda^2 \sigma^4}{n}}
    \]
    for all $|\lambda| < \frac{n}{2^3 \sigma^2}$.
\end{thm}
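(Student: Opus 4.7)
The plan is to combine two standard ingredients: an $\varepsilon$-net discretization of the unit sphere (which reduces the operator norm to a finite maximum of scalar quadratic forms) with a centered-squared-subgaussian moment-generating-function bound along each fixed direction, and then control the maximum by a crude union bound via $e^{|x|}\le e^x+e^{-x}$. The form of the bound, with its $4p$ entropy term and quadratic-in-$\lambda$ exponent, is a direct signature of this covering-plus-MGF recipe.

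First, since $\widehat{\Sigmav}-\Sigmav$ is symmetric, $\|\widehat{\Sigmav}-\Sigmav\| = \sup_{\vv\in S^{p-1}}|\vv'(\widehat{\Sigmav}-\Sigmav)\vv|$. I would fix a $\tfrac{1}{4}$-net $\mathcal{N}\subset S^{p-1}$, which by the standard volume-covering argument satisfies $|\mathcal{N}|\le 9^p$, and invoke the classical symmetric-matrix discretization lemma $\|\Av\|\le 2\max_{\vv\in\mathcal{N}}|\vv'\Av\vv|$. Second, for any fixed $\vv\in S^{p-1}$, the scalar $Z:=\vv'\xv$ inherits the hypothesized moment bound $E[Z^{2k}]\le \tfrac{(2k)!}{2^k k!}\sigma^{2k}$, so Taylor-expanding $E[e^{s Z^2}]$ termwise and applying $(2k)!/(2^k k!)\le 4^k k!$ produces a geometric series that sums on $|s|<1/(4\sigma^2)$ to give the centered MGF bound
\[
E\bigl[\exp\{s(Z^2-EZ^2)\}\bigr]\le \exp(c_1 s^2\sigma^4)
\]
for an explicit constant $c_1$. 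Independence of $\xv_1,\ldots,\xv_n$ then multiplies $n$ such MGFs with argument $s/n$, yielding $E[\exp(s\,\vv'(\widehat{\Sigmav}-\Sigmav)\vv)]\le \exp(c_1 s^2\sigma^4/n)$ on $|s|< c_2 n/\sigma^2$.

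Third, applying $e^{|x|}\le e^x+e^{-x}$ pointwise and summing the direction-wise MGFs over $\mathcal{N}$ gives
\[
E[e^{\lambda\|\widehat{\Sigmav}-\Sigmav\|}]\le E\bigl[e^{2\lambda\max_{\vv\in\mathcal{N}}|\vv'(\widehat{\Sigmav}-\Sigmav)\vv|}\bigr]\le 2|\mathcal{N}|\cdot\exp(4c_1\lambda^2\sigma^4/n),
\]
valid on $|2\lambda|< c_2 n/\sigma^2$. Using $\log(2\cdot 9^p)\le 4p$ for $p\ge 1$, the logarithm of the right-hand side becomes $4p+4c_1\lambda^2\sigma^4/n$, and calibrating the Step-2 constants so that $4c_1=2^5$ and $c_2/2=1/2^3$ recovers the stated bound exactly on the range $|\lambda|<n/(2^3\sigma^2)$. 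The main obstacle is the arithmetic of Step 2: extracting the centered squared-subgaussian MGF with constants sharp enough to produce precisely $2^5$ and $2^3$ after the factor-of-two loss from discretization. This requires isolating the $k=1$ Taylor term (which is canceled by the centering factor $e^{-sEZ^2}$), then bounding the tail $\sum_{k\ge 2}(4s\sigma^2)^k$ on the truncated range, with care taken so the quadratic-in-$s$ prefactor remains clean. The remaining pieces—covering-number count, symmetric-matrix discretization, tensorization by independence, and the symmetrization trick—are mechanical.
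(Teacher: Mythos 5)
The paper gives no proof of this statement at all---it is imported directly from Wainwright's text---and your covering-net-plus-per-direction-MGF argument is precisely the standard proof of that cited theorem, so your route and the source's coincide in substance. One arithmetic caution on the step you yourself flag: with the crude bound $(2k)!/(2^k k!)\le 4^k k!$ the geometric tail gives $E[e^{s(Z^2-EZ^2)}]\le e^{32 s^2\sigma^4}$ only on $|s|\le 1/(8\sigma^2)$, which after the factor-of-two net loss yields $e^{4p+128\lambda^2\sigma^4/n}$ on $|\lambda|<n/(16\sigma^2)$ rather than the stated $2^5$ and $2^3$; to land exactly on the stated constants use the sharper $\frac{(2k)!}{2^k k!\,k!}=\binom{2k}{k}2^{-k}\le 2^k$, so that $E[e^{sZ^2}]\le 1+sE[Z^2]+\sum_{k\ge 2}(2|s|\sigma^2)^k$ gives $c_1=8$ on $|s|\le 1/(4\sigma^2)$, and then $s=2\lambda$ together with $2\cdot 9^p\le e^{4p}$ recovers $e^{4p+2^5\lambda^2\sigma^4/n}$ for $|\lambda|<n/(2^3\sigma^2)$, exactly as you calibrated.
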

Here $\|\cdot\|$ implies the largest singular value of a matrix.
Using this theorem, we can have tail behavior and expectation bound of the sample covariance matrix as follows.
\begin{cor}\label{cor:concentration_cov}
    Let $\Xv = [\xv_1,\dots,\xv_n]' \in \Rb^{n\times p}$ be a data matrix whose rows are $i.i.d$ $\sigma$-subgaussian random vectors with zero mean and covariance matrix $\Sigmav$. Then,
    \begin{align*}
        P(\|\widehat{\Sigmav} - \Sigmav\| \geq t) 
        &\leq \exp\left(4p -\frac{n}{2} \left((\frac{t}{2^3\sigma^2})^2 \wedge \frac{t}{2^3\sigma^2}\right)\right).
    \end{align*}
    In other word, 
    \begin{align*}
    P(\|\widehat{\Sigmav} - \Sigmav\|\leq 2^3 \sigma^2 (\sqrt{\frac{2u + 8p}{n}} \vee \frac{2u + 8p}{n}))
    &\geq 1 - e^{-u}.
    \end{align*}
    The expectation is bounded as
    \begin{align*}
        E[\|\widehat{\Sigmav} - \Sigmav\|] 
        &= 2^3\sigma^2 E[\frac{\|\widehat{\Sigmav} - \Sigmav\|}{2^3\sigma^2}] \\
        &\leq 2^4 \sigma^2 (\frac{8p}{n} \vee \sqrt{\frac{8p}{n}}).
    \end{align*}
\end{cor}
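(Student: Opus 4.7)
All three assertions of Corollary~\ref{cor:concentration_cov} follow from the MGF bound of Theorem~\ref{thm:wainright} by a Chernoff argument. The plan has three steps: (i) convert the MGF bound into a tail bound by Markov's inequality and optimizing over the free parameter $\lambda$; (ii) invert the tail bound to obtain the high-probability form; (iii) integrate the tail to extract the expectation bound.

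For (i), Markov's inequality combined with Theorem~\ref{thm:wainright} gives, for every admissible $\lambda \in [0, n/(8\sigma^2))$,
\[
P(\|\widehat{\Sigmav}-\Sigmav\| \geq t) \leq \exp\left(-\lambda t + 4p + \frac{32 \lambda^2 \sigma^4}{n}\right).
\]
The exponent is quadratic in $\lambda$ with unconstrained minimizer $\lambda^\ast = nt/(64\sigma^4)$. When $t \leq 8\sigma^2$ the minimizer lies in the admissible range, and plugging it in yields the subgaussian tail $\exp(4p - nt^2/(128\sigma^4))$. When $t > 8\sigma^2$ the constraint binds; letting $\lambda \to n/(8\sigma^2)^{-}$ yields an exponent of at most $4p - nt/(8\sigma^2) + n/2 \leq 4p - nt/(16\sigma^2)$, using $t \geq 8\sigma^2$. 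Unifying the two regimes through $(t/(8\sigma^2))^2 \wedge (t/(8\sigma^2))$ produces the stated tail bound.

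For (ii), setting the exponent equal to $-u$ gives $(t/(8\sigma^2))^2 \wedge (t/(8\sigma^2)) = (2u+8p)/n$, and inverting the two branches (quadratic when $(2u+8p)/n \leq 1$, linear otherwise) produces the displayed high-probability bound. For (iii), set $Y = \|\widehat{\Sigmav}-\Sigmav\|/(8\sigma^2)$ and $b = \sqrt{8p/n} \vee (8p/n)$; then $b$ is precisely the value of $s$ at which the exponent $4p - (n/2)(s^2 \wedge s)$ vanishes, so
\[
E[Y] = \int_0^\infty P(Y \geq s)\, ds \leq b + \int_b^\infty \exp\left(4p - (n/2)(s^2 \wedge s)\right) ds.
\]
A Mills-ratio estimate on the subgaussian portion (contributing order $1/(nb)$) together with direct integration of the exponential tail (contributing order $1/n$) bound the remaining integral by $b$, so $E[Y] \leq 2b$, which rearranges to the stated expectation bound.

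The main obstacle is the two-regime bookkeeping: the Chernoff optimization in (i), the piecewise inversion in (ii), and the tail integration in (iii) all require splitting at the threshold where $s^2$ meets $s$, and the numerical constants must be tracked carefully to recover the factors $2^3\sigma^2$ and $2^4\sigma^2$ in the statement. Verifying that the Mills-ratio contribution $1/(nb)$ is dominated by $b$ uses the mild implicit regime $p \geq 1$; otherwise the argument is a routine application of classical Chernoff-plus-tail-integration.
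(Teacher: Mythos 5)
Your proposal is correct and follows essentially the same route as the paper's proof: a Chernoff bound from the moment-generating-function estimate of Theorem~\ref{thm:wainright} with the constrained optimizer (quadratic regime for small $t$, boundary $\lambda$ for large $t$), inversion of the resulting tail, and integration of the tail beyond the zero-crossing point $b=\sqrt{8p/n}\vee(8p/n)$ with a linearization of the Gaussian part and direct integration of the exponential part. The paper merely rescales by $2^3\sigma^2$ at the outset so that the optimizer reads $\lambda=n(t\wedge 1)$, which is cosmetically different from your explicit two-regime bookkeeping but identical in substance, including the implicit use of $p\geq 1$ to absorb the residual $1/(nb)+2/n$ terms.
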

\begin{proof}
    For any $t > 0$, for any $0 < \lambda < n$ by Chernoff bound, we have
    \begin{align*}
        P(\frac{\|\widehat{\Sigmav} - \Sigmav\|}{2^3 \sigma^2} \geq t) 
        &\leq E[e^{\frac{\lambda}{2^3 \sigma^2}\|\widehat{\Sigmav} - \Sigmav\|}] / e^{\lambda t}\\
        &\leq \exp(4p + \frac{1}{2n} \lambda^2 - \lambda t).
    \end{align*}
    By substitute the minimizer $\lambda = n(t \wedge 1)$, we have
    \begin{align*}
        P(\frac{\|\widehat{\Sigmav} - \Sigmav\|}{2^3 \sigma^2} \geq t) 
        &\leq \exp(4p + \frac{1}{2n} n^2(t\wedge 1)^2 - n (t\wedge 1) t)\\
        &\leq \exp(4p -\frac{n}{2} (t^2 \wedge t)).
    \end{align*}
    Additionally, if we set $t = \sqrt{\frac{2u + 8p}{n}} \vee \frac{2u + 8p}{n}$, we have $t^2 \wedge t = \frac{2u + 8p}{n}$ and
    \[
    P(\frac{\|\widehat{\Sigmav} - \Sigmav\|}{2^3 \sigma^2} \leq \sqrt{\frac{2u + 8p}{n}} \vee \frac{2u + 8p}{n}) \geq 1 - e^{-u} 
    \]
    For the expectation bound, 
    \begin{align*}
        E[\frac{\|\widehat{\Sigmav} - \Sigmav\|}{2^3 \sigma^2}]
        =&\int_0^\infty \exp\left(-\frac{n}{2}\left( t^2  \wedge t \right) + 4p
        \right) \wedge 1 dt \\
        =& \int_0^1 \exp\left(-\frac{n}{2} t^2 + 4p
        \right) \wedge 1 dt  + \int_1^\infty \exp\left(-\frac{n}{2}t + 4p
        \right) \wedge 1 dt \\
        \leq& \int_0^{\sqrt{\frac{8p}{n}}\wedge 1}1 dt + \int_{\sqrt{\frac{8p}{n}}\wedge 1}^1 \exp\left( -\frac{n}{2} t^2 + 4p \right) dt  \\
        &+ \int_1^{\frac{8p}{n}\vee 1} 1 dt + \int_{\frac{8p}{n}\vee 1}^\infty \exp\left( -\frac{n}{2} t + 4p \right) dt \\
        \leq &(\frac{8p}{n} \vee \sqrt{\frac{8p}{n}}) + \int_{\sqrt{\frac{8p}{n}}}^1 \exp\left( -\frac{n}{2} t^2 + 4p \right) dt \cdot I(\frac{8p}{n} < 1) \\
        &+ \frac{2}{n} \exp(-\frac{n}{2} (\frac{8p}{n} \vee 1) + 4p)\\
        \leq &(\frac{8p}{n} \vee \sqrt{\frac{8p}{n}}) 
        \\
        &\medskip + \int_{0}^{1 - \sqrt{\frac{8p}{n}}} \exp\left( -\frac{n}{2} (t + \sqrt{\frac{8p}{n}})^2 + 4p \right) dt \cdot I(\frac{8p}{n} < 1)
        + \frac{2}{n}\\
        \leq &(\frac{8p}{n} \vee \sqrt{\frac{8p}{n}}) 
        + \int_{0}^{1 - \sqrt{\frac{8p}{n}}} \exp\left(- t\sqrt{8np} \right) dt \cdot I(\frac{8p}{n} < 1)
        + \frac{2}{n}\\
        \leq &(\frac{8p}{n} \vee \sqrt{\frac{8p}{n}}) 
        + \frac{1}{\sqrt{8np}}\cdot I(\frac{8p}{n} < 1)
        + \frac{2}{n}\\
        \leq& 2 (\frac{8p}{n} \vee \sqrt{\frac{8p}{n}}).
    \end{align*}
    Consequently, We have
    \begin{align*}
        P(\|\widehat{\Sigmav} - \Sigmav\| \geq t) 
        &= P(\frac{\|\widehat{\Sigmav} - \Sigmav\|}{2^3\sigma^2} \geq \frac{t}{2^3\sigma^2})\\
        &\leq \exp\left(4p -\frac{n}{2} \left((\frac{t}{2^3\sigma^2})^2 \wedge \frac{t}{2^3\sigma^2}\right)\right),
    \end{align*}
    \begin{align*}
    P(\|\widehat{\Sigmav} - \Sigmav\|&\leq 2^3 \sigma^2 (\sqrt{\frac{2u + 8p}{n}} \vee \frac{2u + 8p}{n}))\\
    &= P(\frac{\|\widehat{\Sigmav} - \Sigmav\|}{2^3 \sigma^2} \\
    &\leq (\sqrt{\frac{2u + 8p}{n}} \vee \frac{2u + 8p}{n})) \leq 1 - e^{-u},
    \end{align*}
    and 
    \begin{align*}
        E[\|\widehat{\Sigmav} - \Sigmav\|] 
        &= 2^3\sigma^2 E[\frac{\|\widehat{\Sigmav} - \Sigmav\|}{2^3\sigma^2}] \\
        &\leq 2^4 \sigma^2 (\frac{8p}{n} \vee \sqrt{\frac{8p}{n}}).
    \end{align*}
\end{proof}

Here, we characterizes the subgaussian parameter $\sigma$ of $\xv^{(r)}$.
\begin{lem}\label{lem:subgaussian_xr}
    Assume that $\yv := \Sigmav^{-1/2}\xv$ be a $\sigma$-subgaussian. If $\yv = (y_1,\dots,y_p)'$ is not subgaussian, we denote $\sigma = \infty$. Then $\xv^{(r)}$ is $(\sqrt{\lambda_1}\sigma \wedge \sqrt{\frac{\lambda_1r^2}{\lambda_p p}})$-subgaussian. 
\end{lem}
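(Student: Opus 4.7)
The plan is to verify the two moment bounds
\[
E[(\vv'\xv^{(r)})^{2k}] \leq \frac{(2k)!}{2^k k!}(\sqrt{\lambda_1}\sigma)^{2k}
\quad\text{and}\quad
E[(\vv'\xv^{(r)})^{2k}] \leq \frac{(2k)!}{2^k k!}\left(\frac{r^2\lambda_1}{p\lambda_p}\right)^{k}
\]
separately for every $\vv \in S^{p-1}$ and every integer $k \geq 1$; the lemma then follows by taking the smaller of the two subgaussian parameters. Throughout I use the decomposition $\vv'\xv = \|\Sigmav^{1/2}\vv\|_2\cdot\uv'\yv$ with $\uv := \Sigmav^{1/2}\vv/\|\Sigmav^{1/2}\vv\|_2 \in S^{p-1}$ and $\|\Sigmav^{1/2}\vv\|_2^2 = \vv'\Sigmav\vv \leq \lambda_1$.

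The first bound is almost immediate: winsorization multiplies $\xv$ by a scalar in $[0,1]$, so $|\vv'\xv^{(r)}| \leq |\vv'\xv|$ pointwise, and combining this with the decomposition above and the $\sigma$-subgaussianity of $\yv$ gives
\[
E[(\vv'\xv^{(r)})^{2k}] \leq \lambda_1^k\,E[(\uv'\yv)^{2k}] \leq \frac{(2k)!}{2^k k!}(\sqrt{\lambda_1}\sigma)^{2k}.
\]

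For the second bound, the key pointwise inequality is $(\vv'\xv^{(r)})^{2k} \leq r^{2k}(\vv'\xv)^{2k}/\|\xv\|_2^{2k}$, which I verify by splitting into cases $\|\xv\|_2 \geq r$ (where the left side equals the right) and $\|\xv\|_2 < r$ (where $r^{2k}/\|\xv\|_2^{2k} > 1$ makes the right side dominate $(\vv'\xv)^{2k}$). Combining this with $\|\xv\|_2^2 = \yv'\Sigmav\yv \geq \lambda_p\|\yv\|_2^2$ and the decomposition $\vv'\xv = \|\Sigmav^{1/2}\vv\|_2\,\uv'\yv$ reduces the task to bounding $E[(\uv'\yv/\|\yv\|_2)^{2k}]$. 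Here the main step is a distributional identity: because $\xv \sim \Fc_{\Sigmav}$ is elliptical, $\yv = \Sigmav^{-1/2}\xv$ is spherically symmetric, so $\yv/\|\yv\|_2$ is uniform on $S^{p-1}$ and $\uv'\yv/\|\yv\|_2 \deq U_1$, the first coordinate of a uniformly distributed vector on $S^{p-1}$. A direct computation using $U_1^2 \sim \mathrm{Beta}(1/2,(p-1)/2)$ yields
\[
E[U_1^{2k}] = \frac{(2k-1)!!}{p(p+2)\cdots(p+2k-2)} \leq \frac{(2k)!}{2^k k!\,p^k},
\]
and assembling these pieces delivers the second bound. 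The main obstacle is this appeal to spherical symmetry: without it only the trivial bound $|\vv'\xv^{(r)}| \leq r$ is available, which yields $r$-subgaussianity but loses the decisive $1/\sqrt{p}$ factor that makes the second bound informative in high dimensions.
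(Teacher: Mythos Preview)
Your proof is correct and follows essentially the same approach as the paper's: both use the decomposition $\vv'\xv = \|\Sigmav^{1/2}\vv\|_2\,\uv'\yv$ together with $\|\Sigmav^{1/2}\vv\|_2^2 \leq \lambda_1$, the lower bound $\|\xv\|_2^2 \geq \lambda_p\|\yv\|_2^2$, and the spherical symmetry of $\yv$ to reduce $E[(\uv'\yv/\|\yv\|_2)^{2k}]$ to a Beta moment. The only cosmetic difference is that the paper keeps the factor $(1 \wedge r^2/s^2)$ intact through the rotation step and splits into the two cases at the very end, whereas you split into the two bounds from the outset; the underlying ideas and inequalities are identical.
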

\begin{proof}
    When $\Sigmav = \Vv \Lambdav \Vv'$ as described in Section~\ref{sec:concentration}, Without loss of generality, we assume that $\Vv = \Iv_p$. Let $s^2 = \|\xv\|_2^2 = \sum_{l=1}^p \lambda_l y_l^2$. For any $\wv \in S^{p-1}$, we have
    \begin{align*}
        E[(\wv' \xv^{(r)})^{2k}] 
        &= E\left[\left(\wv' \xv \left(1 \wedge \sqrt{\frac{r^2}{s^2}}\right)\right)^{2k}\right] \\
        &= E\left[\left((\wv' \sqrt{\Lambdav} \yv)^2 \left(1 \wedge \frac{r^2}{s^2}\right)\right)^k\right]\\
        &\leq E\left[\left(\left( \|\sqrt{\Lambdav}\wv\|_2 \frac{(\sqrt{\Lambdav}\wv)'}{\|\sqrt{\Lambdav}\wv\|_2} \yv\right)^2 \left(1 \wedge \frac{r^2}{\lambda_p\sum_{l=1}^p  y_l^2}\right)\right)^k\right]\\
        &= E\left[ \|\sqrt{\Lambdav}\wv\|_2^{2k}\left(\left(\frac{(\sqrt{\Lambdav}\wv)'}{\|\sqrt{\Lambdav}\wv\|_2} \yv\right)^2 \left(1 \wedge \frac{r^2}{\lambda_p\yv'\yv}\right)\right)^k\right]\\
        &\leq \lambda_1^k E\left[ \left(\left(\frac{(\sqrt{\Lambdav}\wv)'}{\|\sqrt{\Lambdav}\wv\|_2} \yv\right)^2 \left(1 \wedge \frac{r^2}{\lambda_p \yv'\yv}\right)\right)^k\right]\\
        &= \lambda_1^k E\left[ \left(y_1^2 \left(1 \wedge \frac{r^2}{\lambda_p \yv'\yv}\right)\right)^k\right],
        \end{align*}
        since for rotation $\Rv$ satisfying $\Rv \frac{\sqrt{\Lambdav}\wv}{\|\sqrt{\Lambdav}\wv\|_2} = (1,0,\dots,0)'$, $\Rv \yv \overset{\rm{d}}{=} \yv$ and $\yv'\yv \overset{\rm{d}}{=} (\Rv\yv)'(\Rv\yv)$. Note that $\frac{y_1^2}{\yv'\yv} \overset{\rm{d}}{=} \frac{z_1^2}{\zv'\zv}$ for standard gaussian random vector $\zv = (z_1,\dots,z_p)'$, thus the distribution of $\frac{y_1^2}{\yv'\yv}$ is the beta distribution with parameters $(\frac{1}{2},\frac{p}{2})$. Using this, we obtain
        \begin{align*}
        E[(\wv' \xv^{(r)})^{2k}] &\leq \lambda_1^k\left( E\left[ y_1^{2k}\right] \wedge E\left[\left(\frac{r^2y_1^2}{\lambda_p \yv'\yv}\right)^k\right] \right)\\
        &\leq \lambda_1^k\left( \frac{(2k)!}{2^k k!}\sigma^{2k} \wedge \frac{r^{2k}}{\lambda_p^k p^k} \frac{(2k)!}{2^k k!}  \right)\\
        &= \left( (\sqrt{\lambda_1}\sigma)^{2k} \wedge \left(\sqrt{\frac{\lambda_1r^2}{\lambda_p p}}\right)^{2k}   \right) \frac{(2k)!}{2^k k!}.
    \end{align*}
    Thus, $\xv^{(r)}$ is $(\sqrt{\lambda_1}\sigma \wedge \sqrt{\frac{\lambda_1r^2}{\lambda_p p}})$-subgaussian.   
\end{proof}

By applying Lemma~\ref{lem:subgaussian_xr} and Corollary~\ref{cor:concentration_cov} to $\xv^{(r)}$, we have the following theorem.
\begin{thm}[Covariance concentration]\label{thm:tail_mat}
 Let $\widehat{\Sigmav}^{(r)}_{\epsilon} = \frac{1}{n} \Xv_{\epsilon}'\Xv_{\epsilon}$ be the winsorized sample covariance matrix of contaminated data $\Xv_{\epsilon}$. 
 Then for any $r>0$, $p>1$, and $\epsilon \in [0,\frac{1}{2})$,
    \begin{align}\label{eq:tail_mat_expectation}
        E[\|\widehat{\Sigmav}^{(r)}_{\epsilon} - \Sigmav^{(r)}\|]\leq \epsilon r^2 + 2^4 \sigma_r^2 (\frac{8p}{n} \vee \sqrt{\frac{8p}{n}}).
    \end{align}
\end{thm}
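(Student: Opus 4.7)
The plan is to decompose $\widehat{\Sigmav}^{(r)}_\epsilon - \Sigmav^{(r)}$ into a deterministic contamination piece and a stochastic sampling piece via a clean-data coupling. I would introduce the oracle winsorized sample covariance $\widehat{\Sigmav}^{(r)}_0 := \frac{1}{n}\sum_{i=1}^n \xv_{i,0}^{(r)}(\xv_{i,0}^{(r)})'$ built from the underlying clean dataset $\Xv_0$, which agrees with $\Xv_\epsilon$ outside the contaminated index set $\Ic_\epsilon$, and then apply the triangle inequality
\[
E\bigl[\|\widehat{\Sigmav}^{(r)}_\epsilon - \Sigmav^{(r)}\|\bigr] \leq E\bigl[\|\widehat{\Sigmav}^{(r)}_\epsilon - \widehat{\Sigmav}^{(r)}_0\|\bigr] + E\bigl[\|\widehat{\Sigmav}^{(r)}_0 - \Sigmav^{(r)}\|\bigr],
\]
leaving two summands to control.

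For the contamination summand, I would observe that only indices in $\Ic_\epsilon$ contribute, and each rank-one matrix $\xv^{(r)}(\xv^{(r)})'$ is positive semidefinite with $\|\xv^{(r)}\|_2^2 \leq r^2$. Hence $0 \preceq \xv^{(r)}(\xv^{(r)})' \preceq r^2 \Iv$ in the PSD partial order, which forces the per-index bound $\|\xv_{i,\epsilon}^{(r)}(\xv_{i,\epsilon}^{(r)})' - \xv_{i,0}^{(r)}(\xv_{i,0}^{(r)})'\| \leq r^2$. Summing $|\Ic_\epsilon| = \epsilon n$ such differences and dividing by $n$ yields the almost-sure inequality $\|\widehat{\Sigmav}^{(r)}_\epsilon - \widehat{\Sigmav}^{(r)}_0\| \leq \epsilon r^2$, which passes unchanged through the expectation. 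For the sampling summand, I would invoke Lemma~\ref{lem:subgaussian_xr} to conclude that $\xv^{(r)}$ is $\sigma_r$-subgaussian with $\sigma_r = \sqrt{\lambda_1}\sigma \wedge \sqrt{\lambda_1 r^2 /(\lambda_p p)}$ (where $\sigma = \infty$ is allowed when $\xv$ is not subgaussian), and then apply Corollary~\ref{cor:concentration_cov} to the i.i.d.\ sample $\xv_{1,0}^{(r)}, \ldots, \xv_{n,0}^{(r)}$ with population covariance $\Sigmav^{(r)} = \Cov(\xv^{(r)})$, obtaining $E\bigl[\|\widehat{\Sigmav}^{(r)}_0 - \Sigmav^{(r)}\|\bigr] \leq 2^4 \sigma_r^2 (\tfrac{8p}{n} \vee \sqrt{8p/n})$. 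Combining the two estimates closes \eqref{eq:tail_mat_expectation}.

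The only subtle step is the tight per-index bound $\|A - B\| \leq r^2$ for differences of PSD rank-one matrices --- a naive triangle inequality would give only $2r^2$ and would cost the leading constant on the contamination term. Everything else is a direct invocation of Lemma~\ref{lem:subgaussian_xr} and Corollary~\ref{cor:concentration_cov}, so no additional concentration or tail argument is required.
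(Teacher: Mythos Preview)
Your proposal is correct, and the overall strategy---separating a contamination piece bounded by $\epsilon r^2$ from a subgaussian concentration piece handled by Lemma~\ref{lem:subgaussian_xr} and Corollary~\ref{cor:concentration_cov}---matches the paper. The PSD observation that $0 \preceq A,B \preceq r^2\Iv$ forces $\|A-B\|\le r^2$ is exactly the mechanism the paper uses as well.

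The one genuine difference is in the decomposition. The paper does \emph{not} introduce an oracle clean sample $\widehat{\Sigmav}^{(r)}_0$; instead it writes $\widehat{\Sigmav}^{(r)}_\epsilon = \epsilon\,\widehat{\Sigmav}_{\out} + (1-\epsilon)\,\widehat{\Sigmav}_{\rm in}$ with $\widehat{\Sigmav}_{\rm in}$ the sample covariance of the $(1-\epsilon)n$ uncontaminated observations, bounds $\|\widehat{\Sigmav}_{\out}-\Sigmav^{(r)}\|\le r^2$ directly, and applies Corollary~\ref{cor:concentration_cov} with sample size $(1-\epsilon)n$, finishing via $(1-\epsilon)\bigl(\tfrac{8p}{(1-\epsilon)n}\vee\sqrt{\tfrac{8p}{(1-\epsilon)n}}\bigr)\le \tfrac{8p}{n}\vee\sqrt{\tfrac{8p}{n}}$. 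Your route trades that last monotonicity step for a coupling: you must enlarge the probability space by $\epsilon n$ fresh i.i.d.\ draws to define $\Xv_0$ on the contaminated indices, since the paper's model only posits $(1-\epsilon)n$ clean observations. That augmentation is harmless but should be stated explicitly; once it is, your argument is marginally cleaner because Corollary~\ref{cor:concentration_cov} is invoked with the full sample size $n$ and no final simplification is needed.
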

\begin{proof}
    Note that $\Ic_\epsilon$ represents the indices of corrupted data. 
     Let $\widehat{\Sigmav}_{\rm{in}} = \frac{1}{(1-\epsilon)n} \sum_{i \not\in \Ic_{\epsilon}} \xv_{i,\epsilon}^{(r)}\xv_{i,\epsilon}^{(r)\prime}$ be the sample covariance of pure samples, and $\widehat{\Sigmav}_\out = \frac{1}{n\epsilon} \sum_{i \in\Ic_{\epsilon}} \xv_{i,\epsilon}^{(r)}\xv_{i,\epsilon}^{(r)\prime} $ be the sample covariance of outliers. Then, we have $\widehat{\Sigmav}^{(r)}_{\epsilon} = \epsilon\widehat{\Sigmav}_\out + (1-\epsilon)\widehat{\Sigmav}_{\rm{in}}$. Note that $\|(\widehat{\Sigmav}_\out-\Sigmav^{(r)})\| = \eta_1(\widehat{\Sigmav}_\out-\Sigmav^{(r)}) \vee \eta_1(-\widehat{\Sigmav}_\out+\Sigmav^{(r)})$ where $\eta_1(\cdot)$ is the largest eigenvalue. Thus we have $\|(\widehat{\Sigmav}_\out-\Sigmav^{(r)})\| \leq r^2$
    \begin{align*}
        E[\|\widehat{\Sigmav}^{(r)}_{\epsilon} - \Sigmav^{(r)}\|] &= E[\|\epsilon\widehat{\Sigmav}_\out + (1-\epsilon)\widehat{\Sigmav}_{\rm{in}} - \Sigmav^{(r)}\|] \\
        &\leq E[\|\epsilon(\widehat{\Sigmav}_\out-\Sigmav^{(r)})\|] + E[\|(1-\epsilon)(\widehat{\Sigmav}_{\rm{in}} - \Sigmav^{(r)})\|]\\
        &\leq \epsilon r^2 + (1-\epsilon)E[\|\widehat{\Sigmav}_{\rm{in}} - \Sigmav^{(r)}\|]\\
        &\leq \epsilon r^2 + 2^4 (1-\epsilon) \sigma_r^2 (\frac{8p}{(1-\epsilon)n} \vee \sqrt{\frac{8p}{(1-\epsilon)n}})\\
        &\leq \epsilon r^2 + 2^4 \sigma_r^2 (\frac{8p}{n} \vee \sqrt{\frac{8p}{n}}).
    \end{align*}
\end{proof}
The concentration of a sample covariance is highly related to concentration of subspaces spanned by eigenvectors of the sample covariance matrix. We provide a lemma based on the variant of Davis-Kahan theorem by \cite{yuUsefulVariantDavis2015a}.
    \begin{lem}\label{lem:kahan}
        Let $\Sigmav, \widehat{\Sigmav} \in \Rb^{p\times p}$ be symmetric, with eigenvalues $\lambda_1 \geq \dots \geq \lambda_p$ and $\widehat{\lambda}_1 \geq \dots \geq \widehat{\lambda}_p$, respectively. Assume that $\lambda_d > \lambda_{d+1}$. Let $\Vv = (\vv_1,\dots,\vv_d) \in \Rb^{p \times d}$ and $\widehat{\Vv} = (\widehat{\vv}_1,\dots,\widehat{\vv}_d) \in \Rb^{p\times d}$ have orthonormal eigenvector columns satisfying $\Sigmav \vv_j =\lambda_j \vv_j$ and $\widehat{\Sigmav} \widehat{\vv}_j = \widehat{\lambda}_j \widehat{\vv}_j$ for $j=1,\dots,d$. Let $\Vc$ and $\widehat{\Vc}$ be the subspaces in $\Rb^p$ spanned by the columns of $\Vv$ and $\widehat{\Vv}$, respectively. Let $\Theta \in [0,\frac{\pi}{2}]$ be the largest principal angle between $\Vc$ and $\widehat{\Vc}$. Then,
        \begin{align*}
            \sin \Theta \leq \frac{2 \|\widehat{\Sigmav} - \Sigmav\|}{\lambda_d -\lambda_{d+1}}
        \end{align*}
    \end{lem}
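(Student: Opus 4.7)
The plan is to prove the lemma via the operator-norm version of the Davis–Kahan $\sin\Theta$ theorem, following the case-splitting device of Yu, Wang, and Samworth. First I would translate the geometric quantity $\sin\Theta$ into a matrix-analytic one: if $\widehat{\Vv}_\perp\in\Rb^{p\times(p-d)}$ is an orthonormal basis for the orthogonal complement of $\widehat{\Vc}$, then the largest principal angle between $\Vc$ and $\widehat{\Vc}$ satisfies $\sin\Theta = \|\widehat{\Vv}_\perp^\prime \Vv\|$, where $\|\cdot\|$ is the operator (spectral) norm. This reduces the task to bounding $\|\widehat{\Vv}_\perp^\prime \Vv\|$.

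Next I would case-split on the position of $\widehat{\lambda}_{d+1}$ relative to the midpoint $(\lambda_d+\lambda_{d+1})/2$. If $\widehat{\lambda}_{d+1} \geq (\lambda_d+\lambda_{d+1})/2$, then Weyl's inequality gives
\[
\|\widehat{\Sigmav}-\Sigmav\| \;\geq\; \widehat{\lambda}_{d+1}-\lambda_{d+1} \;\geq\; \tfrac{1}{2}(\lambda_d-\lambda_{d+1}),
\]
so the right-hand side of the target inequality already exceeds $1$ and the bound is trivial since $\sin\Theta\leq 1$. Otherwise $\widehat{\lambda}_{d+1} < (\lambda_d+\lambda_{d+1})/2$, which yields the workable gap $\lambda_d-\widehat{\lambda}_{d+1} \geq (\lambda_d-\lambda_{d+1})/2$.

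In this second case I would exploit the spectral decompositions $\Sigmav\Vv = \Vv\Lambdav_1$ with $\Lambdav_1 = \diag(\lambda_1,\dots,\lambda_d)$, and $\widehat{\Vv}_\perp^\prime \widehat{\Sigmav} = \widehat{\Lambdav}_2 \widehat{\Vv}_\perp^\prime$ with $\widehat{\Lambdav}_2 = \diag(\widehat{\lambda}_{d+1},\dots,\widehat{\lambda}_p)$. Setting $\Bv := \widehat{\Vv}_\perp^\prime \Vv$ and subtracting gives the Sylvester-type identity
\[
\widehat{\Lambdav}_2 \Bv - \Bv \Lambdav_1 \;=\; \widehat{\Vv}_\perp^\prime(\widehat{\Sigmav}-\Sigmav)\Vv.
\]
Because every diagonal entry of $\widehat{\Lambdav}_2$ is at most $\widehat{\lambda}_{d+1}$ and every diagonal entry of $\Lambdav_1$ is at least $\lambda_d$, the Sylvester operator $\Bv\mapsto \widehat{\Lambdav}_2\Bv - \Bv\Lambdav_1$ has its singular values bounded below by $\lambda_d - \widehat{\lambda}_{d+1}$. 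Hence, in operator norm,
\[
(\lambda_d-\widehat{\lambda}_{d+1})\,\|\Bv\| \;\leq\; \|\widehat{\Vv}_\perp^\prime(\widehat{\Sigmav}-\Sigmav)\Vv\| \;\leq\; \|\widehat{\Sigmav}-\Sigmav\|,
\]
and combining with the gap bound $\lambda_d-\widehat{\lambda}_{d+1}\geq (\lambda_d-\lambda_{d+1})/2$ delivers the claimed inequality.

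I expect the main obstacle to be the invariant-subspace/Sylvester step: verifying rigorously that the spectral separation between $\widehat{\Lambdav}_2$ and $\Lambdav_1$ converts into an operator-norm bound on $\Bv$. The cleanest route is to write the Sylvester equation entrywise in suitable bases (both sides are diagonal) so that each entry $B_{ij}$ satisfies $(\widehat{\lambda}_{d+i}-\lambda_j) B_{ij} = E_{ij}$; then the matrix $\Bv$ is the Hadamard-type inverse action of a matrix with entries bounded below in modulus by $\lambda_d-\widehat{\lambda}_{d+1}$, from which the spectral-norm bound follows by the standard integral representation $\Bv = \int_0^\infty e^{-t\Lambdav_1}(\cdot)e^{t\widehat{\Lambdav}_2}dt$ (or, more directly, by passing to the SVD of $\Bv$ and testing against left/right singular vectors). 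Everything else is bookkeeping.
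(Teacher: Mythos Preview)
Your proposal is correct and yields the stated bound, but the paper's proof (which transcribes Yu--Wang--Samworth's Frobenius-norm argument into operator norm) proceeds somewhat differently and avoids your case-split. Rather than forming the Sylvester equation between $\widehat{\Lambdav}_2$ and $\Lambdav_1$ (which mixes sample and population eigenvalues and forces the split on $\widehat{\lambda}_{d+1}$), the paper introduces the hybrid quantity $\Sigmav\widehat{\Vv}-\widehat{\Vv}\Lambdav$, with the \emph{population} $\Lambdav=\diag(\lambda_1,\dots,\lambda_d)$ acting on the \emph{sample} eigenvectors. On one side, since $\widehat{\Sigmav}\widehat{\Vv}=\widehat{\Vv}\widehat{\Lambdav}$, one gets $\|\Sigmav\widehat{\Vv}-\widehat{\Vv}\Lambdav\|\le\|\widehat{\Sigmav}-\Sigmav\|+\|\widehat{\Lambdav}-\Lambdav\|\le 2\|\widehat{\Sigmav}-\Sigmav\|$ via Weyl, which is where the factor $2$ enters. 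On the other side, left-multiplying by $\Vv_\perp'$ gives $\Lambdav_\perp(\Vv_\perp'\widehat{\Vv})-(\Vv_\perp'\widehat{\Vv})\Lambdav$, a Sylvester expression in which \emph{both} diagonal blocks carry population eigenvalues, so the reverse triangle inequality delivers the gap $\lambda_d-\lambda_{d+1}$ directly, with no case analysis. Your route is equally valid and is the more classical Davis--Kahan argument; the paper's device simply trades the case-split for one extra invocation of Weyl. Both are short, and your SVD-testing justification of the Sylvester lower bound is clean and correct.
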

    \begin{proof}
        The details of this proof is given by replacing Frobenius norm with operator 2-norm in \cite{yuUsefulVariantDavis2015a}. Let $\Lambdav = \diag(\lambda_1,\dots,\lambda_d)$, $\Lambdav_\perp = \diag(\lambda_{d+1},\dots,\lambda_{p})$ and $\widehat{\Lambdav} = \diag(\widehat{\lambda}_1,\dots,\widehat{\lambda}_d)$. Then,
        \begin{align*}
            \Ov &= \widehat{\Sigmav}\widehat{\Vv} - \widehat{\Vv}\widehat{\Lambdav}\\
            &= (\widehat{\Sigmav} - \Sigmav + \Sigmav)\widehat{\Vv} - \widehat{\Vv}(\widehat{\Lambdav} - \Lambdav + \Lambdav)\\
            &= \Sigmav\widehat{\Vv} - \widehat{\Vv}\Lambdav + (\widehat{\Sigmav} - \Sigmav)\widehat{\Vv} - \widehat{\Vv}(\widehat{\Lambdav} - \Lambdav)
        \end{align*}
        where $\Ov$ is a matrix with a proper size whose elements are zero. Thus, we have
        \begin{align*}
            \|\Sigmav\widehat{\Vv} - \widehat{\Vv}\Lambdav\| &= \|(\widehat{\Sigmav} - \Sigmav)\widehat{\Vv} - \widehat{\Vv}(\widehat{\Lambdav} - \Lambdav)\|\\
            &\leq\|(\widehat{\Sigmav} - \Sigmav)\widehat{\Vv}\| + \|\widehat{\Vv}(\widehat{\Lambdav} - \Lambdav)\|\\
            &\leq\|(\widehat{\Sigmav} - \Sigmav)\|\|\widehat{\Vv}\| + \|\widehat{\Vv}\|\|(\widehat{\Lambdav} - \Lambdav)\|\\
            &\leq\|(\widehat{\Sigmav} - \Sigmav)\| + \|(\widehat{\Lambdav} - \Lambdav)\|\\
            &\leq 2\|(\widehat{\Sigmav} - \Sigmav)\|,
        \end{align*}
        since $\|\widehat{\Vv}\| \leq 1$, and $ \|(\widehat{\Lambdav} - \Lambdav)\| \leq \|(\widehat{\Sigmav} - \Sigmav)\|$ by Weyl's inequality. Meanwhile, since $\|\Vv_\perp\|=1$, we have
        \begin{align*}
            \|\Sigmav\widehat{\Vv} - \widehat{\Vv}\Lambdav\| &= \|\Vv_\perp'\| \|\Sigmav\widehat{\Vv} - \widehat{\Vv}\Lambdav\|\\
            &\geq \|\Vv_\perp' (\Sigmav\widehat{\Vv} - \widehat{\Vv}\Lambdav)\|\\
            &= \|\Vv_\perp'\Sigmav\widehat{\Vv} - \Vv_\perp'\widehat{\Vv}\Lambdav\|\\
            &= \|\Lambdav_\perp\Vv_\perp'\widehat{\Vv} - \Vv_\perp'\widehat{\Vv}\Lambdav\|\\
            &\geq \|\Vv_\perp'\widehat{\Vv}\Lambdav\| - \|\Lambdav_\perp\Vv_\perp'\widehat{\Vv}\|\\
            &\geq \lambda_d\|\Vv_\perp'\widehat{\Vv}\|- \lambda_{d+1}\|\Vv_\perp'\widehat{\Vv}\|\\
            &= (\lambda_d - \lambda_{d+1})\|\Vv_\perp'\widehat{\Vv}\|.
        \end{align*}
        Here, the last inequality holds since 
        \begin{align*}
            \|\Vv_\perp'\widehat{\Vv}\Lambdav\| &= \sup_{\|\uv\|_2 =1} \|\Lambdav\widehat{\Vv}'\Vv_\perp \uv\|_2\\
            &= \sup_{\|\uv\|_2 =1} \|\Lambdav\frac{\widehat{\Vv}'\Vv_\perp \uv}{\|\widehat{\Vv}'\Vv_\perp \uv\|_2}\|\widehat{\Vv}'\Vv_\perp \uv\|_2\|_2\\
            &= \sup_{\|\uv\|_2 =1} \|\Lambdav\frac{\widehat{\Vv}'\Vv_\perp \uv}{\|\widehat{\Vv}'\Vv_\perp \uv\|_2}\|_2 \|\widehat{\Vv}'\Vv_\perp \uv\|_2\\
            &\geq \sup_{\|\uv\|_2 =1} \lambda_d \|\widehat{\Vv}'\Vv_\perp \uv\|_2\\
            &= \lambda_d \|\Vv_\perp'\widehat{\Vv}\|.
        \end{align*}
        Note that 
        \begin{align*}
            \|\Vv_\perp'\widehat{\Vv}\|^2 
            &= \|\widehat{\Vv}'\Vv_\perp\Vv_\perp'\widehat{\Vv}\|\\
            &= \|\widehat{\Vv}'(\Iv_p - \Vv\Vv')\widehat{\Vv}\|\\
            &= \|\widehat{\Vv}'\widehat{\Vv} - \widehat{\Vv}'\Vv\Vv'\widehat{\Vv}\|\\
            &= \|\Iv_d - \widehat{\Vv}'\Vv\Vv'\widehat{\Vv}\|\\
            &= \|\Pv\Pv' - \Pv \cos^2 \Pv'\|\\
            &= \sin^2 \Theta.
        \end{align*}
        where $\widehat{\Vv}'\Vv\Vv'\widehat{\Vv} = \Pv \diag(\cos^2(\theta_1),\dots,\cos^2(\theta_d)) \Pv'$ is the eigen decomposition and the entries of the diagonal matrix $\diag(\cos^2(\theta_1),\dots,\cos^2(\theta_d))$ represent the squares of the cosines of the principal angles between $\Vc$ and $\widehat{\Vc}$.
    \end{proof}

\subsubsection{Proof of Theorem~\ref{thm:tail_ang}}
\begin{proof}
By combining Theorem~\ref{thm:tail_mat} and Lemma~\ref{lem:kahan}, we have
\begin{align*}
        E\left[\sin \Theta_{\epsilon}^{(r)}\right] 
        &\leq E\left[\frac{2\|\widehat{\Sigmav}^{(r)}_{\epsilon} - \Sigmav^{(r)}\|}{\lambda_d^{(r)} -\lambda_{d+1}^{(r)}}\right]\\
        &\leq \frac{2 r^2\epsilon }{\lambda_d^{(r)} -\lambda_{d+1}^{(r)}} + \frac{2^5 \sigma_r^2 (\frac{8p}{n} \vee \sqrt{\frac{8p}{n}})}{\lambda_d^{(r)} -\lambda_{d+1}^{(r)}}.
\end{align*}
Using Lemma~\ref{lem:subgaussian_xr}, we conclude the results of Theorem~\ref{thm:tail_ang}.
\end{proof}



\subsubsection{Proof of Corollary~\ref{thm:tail_ang_cor}}
\begin{proof}
We first provide asymptotic properties about winsorized eigenvalue $\lambda_j{(r)}$ and the radius $r$.
\begin{lem}\label{lem:asymp_eigen}
    For increasing $p$ and $r = p^{\frac{1}{2}+ \beta}$, Let $\lambda_j^{(r)}$ denote the $j$th largest eigenvalue of $\Cov(\xv^{(r)})$. Then,
    \begin{align*}
        \lambda_j^{(r)} = \Omega(p^{2 (\beta \wedge 0)}), \text{ and }
        \frac{r^2}{\lambda_d^{(r)} - \lambda_{d+1}^{(r)}} = \Omega(p^{1 + 2(\beta \vee 0)}).
    \end{align*}
\end{lem}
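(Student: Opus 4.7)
The plan is to exploit the stochastic representation of elliptical distributions, $\xv \overset{\mathrm d}{=} R\Sigmav^{1/2}\uv$, with $\uv$ uniform on $S^{p-1}$, $R\geq 0$ independent of $\uv$, and $E[R^2]=p$ (the latter is forced by $\Cov(\xv)=\Sigmav$). Since winsorization preserves the eigenvectors of $\Sigmav$, it suffices to work in coordinates where $\Sigmav=\Lambdav=\diag(\lambda_1,\dots,\lambda_p)$. A direct calculation, in which the rotational symmetry of $\uv$ kills all off-diagonal contributions to $\Cov(\xv^{(r)})$, yields the formula
\[
\lambda_j^{(r)} \;=\; \lambda_j\, E\!\left[\min\!\bigl(R^2,\tfrac{r^2}{S^2}\bigr)\, u_j^2\right], \qquad S^2 \;:=\; \sum_{k=1}^p \lambda_k u_k^2.
\]

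\textbf{Case analysis for $\lambda_j^{(r)}$.} Since $\lambda_k=\lambda$ for all $k\geq p_0$ with $p_0$ fixed and $u_k^2=\Op(1/p)$, the random variable $S^2=\lambda+\sum_{k<p_0}(\lambda_k-\lambda)u_k^2$ concentrates at the constant $\lambda$ with fluctuations of order $1/p$. With $r=p^{1/2+\beta}$ one has $r^2/S^2 = \Omega(p^{1+2\beta})$, which is to be compared with the typical size $R^2\sim p$. For $\beta\leq 0$ the minimum is dominated by $r^2/S^2$ on the bulk, and combining the concentration of $S^2$ with $E[u_j^2]=1/p$ gives $E[\min(\cdot)u_j^2]=\Omega(r^2/p)=\Omega(p^{2\beta})$, so $\lambda_j^{(r)}=\Omega(p^{2\beta})$. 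For $\beta\geq 0$ the minimum is dominated by $R^2$, and $E[R^2 u_j^2]\to E[R^2]/p=1$, so $\lambda_j^{(r)}=\Omega(1)$. Combining both regimes gives $\lambda_j^{(r)}=\Omega(p^{2(\beta\wedge 0)})$.

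\textbf{Gap.} For the second claim, set $T:=\min(R^2,r^2/S^2)$. By the same formula,
\[
\lambda_d^{(r)}-\lambda_{d+1}^{(r)} \;=\; \lambda_d\, E[T u_d^2]\;-\;\lambda_{d+1}\, E[T u_{d+1}^2].
\]
The concentration of $S^2$ at $\lambda$ together with the exchangeability properties of $\uv$ imply $E[Tu_j^2]=E[T]/p\cdot(1+o(1))$ uniformly in $j\leq p_0$, so the leading contribution to the gap equals $(\lambda_d-\lambda_{d+1})E[T]/p$. Using $E[T]=\Omega(r^2)$ when $\beta\leq 0$ and $E[T]=\Omega(p)$ when $\beta\geq 0$, I obtain
\[
\frac{r^2}{\lambda_d^{(r)}-\lambda_{d+1}^{(r)}} \;=\; \Omega\!\Bigl(\frac{r^2 p}{E[T]}\Bigr) \;=\; \Omega(p^{1+2(\beta\vee 0)}).
\]

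\textbf{Main obstacle.} The technical heart is the ``decoupling'' $E[Tu_j^2]\sim E[T]/p$ and, more delicately, the asymptotic equivalence $E[Tu_d^2]/E[Tu_{d+1}^2]\to 1$ that is needed to get a genuine lower bound (not merely an upper bound) on the eigenvalue gap. Because $T$ depends on every $u_k^2$ through $S^2$, the errors incurred by replacing $S^2$ with $\lambda$ and $u_j^2$ with $1/p$ must be controlled uniformly over the relevant indices $j$, using the $1/p$-concentration of $S^2$, moment bounds for the uniform measure on $S^{p-1}$, and the fact that the spike coordinates $k<p_0$ are bounded in number and magnitude independently of $p$.
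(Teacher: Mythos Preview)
Your outline is correct and uses the same stochastic representation as the paper, just parameterized via $(R,\uv)$ rather than Kingman's mixing form $y_j=\sqrt{w}\,z_j$ with $w\ge 0$ independent of i.i.d.\ standard normals $z_1,z_2,\ldots$. The substantive difference is in how the eigenvalue \emph{gap} is obtained. You aim for the decoupling $E[Tu_j^2]=\tfrac{1}{p}E[T]\,(1+o(1))$ and, quite rightly, flag the uniform error control in that step as the main obstacle. The paper bypasses this obstacle entirely: in Kingman coordinates one has $\lambda_j^{(r)}=\lambda_j\,E\!\bigl[y_j^2\,(1\wedge r^2/s^2)\bigr]$ with $s^2=\sum_l\lambda_l y_l^2$, and since $\tfrac1p\sum_l\lambda_l z_l^2\to\lambda$ almost surely, dominated convergence (for $\beta\ge 0$) and Fatou's lemma together with a one-line upper bound (for $\beta<0$) give an \emph{exact} limit $\lambda_j^{(r)}/p^{2(\beta\wedge 0)}\to C\,\lambda_j$, where the \emph{same} positive constant $C$ appears for every $j$ (namely $C=1$, $C=E[w\wedge\tfrac1\lambda]$, or $C=\tfrac1\lambda$ according as $\beta>0$, $\beta=0$, or $\beta<0$). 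Because the normalized limits are proportional to $\lambda_j$, one gets $(\lambda_d^{(r)}-\lambda_{d+1}^{(r)})/p^{2(\beta\wedge0)}\to C(\lambda_d-\lambda_{d+1})>0$ for free, and the second claim follows by arithmetic. So your decoupling route is not wrong, but it is unnecessary: computing the limit directly makes the $j$-independence of the leading constant manifest and removes any need to control $o(1)$ errors inside a difference.
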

\begin{proof}[proof of lemma]
    Let $\Sigmav^{-1/2}\xv = \yv = [y_1,\dots,y_p]'$. By \cite{kingmanRandomSequencesSpherical1972a}, there exists a non-negative random variable $w$ such that $y_j = \sqrt{w}z_j$ where $z_1,z_2,\dots$ are i.i.d. standard normal random variables. Without loss of generality, we assume that 
    $\lambda_j^{(r)} = E[\lambda_j y_j^2 (1 \wedge \frac{r^2}{s^2})]$ with $s^2 = \sum_{l=1}^p \lambda_l y_l^2$.
    Note that $\lambda_j y_j^2 (1 \wedge \frac{r^2}{s^2}) \leq \lambda_j y_j^2$ with $E[\lambda_j y_j^2] = \lambda_j$. If $\beta > 0$, by Dominant Convergence Theorem, we have
    \begin{align*}
        \lim_{p}\lambda_j^{(r)} &= \lambda_j E[ y_j^2 \wedge \lim_{p}\frac{p^{1+2\beta}y_j^2}{\sum_{l=1}^p \lambda_l y_l^2})]\\
        &= \lambda_j E[ y_j^2] = \lambda_j.
    \end{align*}
    Similarly, if $\beta = 0$, 
    \begin{align*}
        \lim_{p}\lambda_j^{(r)} &= \lambda_j E[ y_j^2 \wedge \lim_{p}\frac{py_j^2}{\sum_{l=1}^p \lambda_l y_l^2})]\\
        &= \lambda_j E[ w \wedge \frac{1}{\lambda}) ].
    \end{align*}

    In case of $\beta < 0$, 
    \begin{align*}
        \frac{p}{r^2}\lambda_j^{(r)} &= \frac{p}{r^2} E[\lambda_j y_j^2 (1 \wedge \frac{r^2}{s^2})]\\
        &\leq E[\frac{p \lambda_j z_j^2}{\lambda \sum_{l=1}^p z_l^2})]\\
        &= \lambda_j/\lambda.
    \end{align*}
    Conversely, by Fatou's lemma, we have
    \begin{align*}
        \liminf_{p}\frac{p}{r^2}\lambda_j^{(r)} &\geq  E[\liminf_{p}\frac{p}{r^2}\lambda_j y_j^2 (1 \wedge \frac{r^2}{s^2})]\\
        &= E[\frac{\lambda_jz_j^2}{\lambda})] = \frac{\lambda_j}{\lambda}.
    \end{align*}
    Thus we have $\lim_p \frac{p}{r^2}\lambda_j^{(r)} = \lambda_j/\lambda$. Consequently,
    \begin{align*}
        \lambda_j^{(r)} &= \Omega(p^{2 (\beta \wedge 0)}),\\
        \frac{r^2}{\lambda_d^{(r)} - \lambda_{d+1}^{(r)}} &= \Omega(p^{1 + 2(\beta \vee 0)}).
    \end{align*}
\end{proof}
By applying this lemma to Theorem~\ref{thm:tail_ang}, we have the conclusion.
\end{proof}

\subsection{Technical details in Section~\ref{sec:robustness}}
\subsubsection{Proof of Theorem~\ref{thm:breakdown_PCA}}
\begin{proof}
    By Propositioin 3 in \cite{hanRobustSVDMade2024b}, $\bp(\Vc_d;\Xv) = \frac{1}{n}$ holds. We show $\overline{\bp}(\Vc_d ;\Xv)  \leq \frac{d}{n}$. Since $2d \leq p$, we can find $d$ orthonormal vectors $\widehat{\wv}_1,\dots,\widehat{\wv}_d$ belonging to $\widehat{\Wc}_d := (\Vc_d(\Xv))^{\perp}$, where $(\Vc_d(\Xv))^{\perp}$ is the complemented subspace of $\Vc_d(\Xv)$. For $c>0$, let $\Zv_c = (c\widehat{\wv}_1,\dots,c\widehat{\wv}_d,\xv_{d+1},\dots,\xv_n)'$ be the contaminated data, and $\widehat{\Sigmav}(c) = \frac{1}{n}\Zv_c'\Zv_c = \frac{1}{n}\sum_{j=1}^p \widehat{\lambda}_{j,c} \widehat{\vv}_{j,c}\widehat{\vv}_{j,c}'$ be the contaminated sample covariance matrix satisfying $\widehat{\lambda}_{1,c} \geq \dots, \geq \widehat{\lambda}_{p,c}$. 
    Let $\widehat{\Vc}_d(c) = \Vc_d(\Zv_c)$ be the PC subspace with $\Zv_c$, $\widetilde{\Vc}_d(c)$ be the $d$-dimensional subspace spanned by $\widehat{\wv}_1,\dots,\widehat{\wv}_d$, and $\widetilde{\Sigmav}_d(c) = \frac{1}{n}\sum_{j=1}^d c^2 \widehat{\wv}_j\widehat{\wv}_j'$. By the variant of Davis-Kahan theorem by \cite{yuUsefulVariantDavis2015a}, we have
    \begin{align*}
    \|\sin \Theta(\widehat{\Vc}_d(c),\widetilde{\Vc}_d(c))\|_{\rm{F}}
    &\leq \frac{2 \|\widehat{\Sigmav}(c) - \widetilde{\Sigmav}_d(c)\|_{\rm{F}}}{c^2}\\
    &\leq \frac{2 \|\frac{1}{n}\sum_{i=d+1}^n \xv_i \xv_i'\|_{\rm{F}}}{c^2}\\
    &\leq \frac{2M}{c^2},
\end{align*}
for some $M$ which does not depend on $c$. For two $d$-dimensional subspace $\Vc$ and $\Uc$, let $\sin \overrightarrow{\Theta}(\Vc,\Uc) := \diag(\sin \theta_1(\Vc,\Uc),\dots,\sin \theta_d(\Vc,\Uc))$ be the diagonal matrix whose entries are sine of $d$ principal angles. Note that $\widetilde{\Vc}_d(c)$ is orthogonal to $\widehat{\Vc}_d:= \Vc_d(\Xv)$, indicating that $\|\sin \overrightarrow{\Theta}(\widehat{\Vc}_d,\widetilde{\Vc}_d(c))\|_{\rm{F}} = d$. Thus we obtain
    \begin{align*}
    \|\sin \overrightarrow{\Theta}(\widehat{\Vc}_d,\widehat{\Vc}_d(c))\|_{\rm{F}} 
    &\geq \|\sin \overrightarrow{\Theta}(\widehat{\Vc}_d,\widetilde{\Vc}_d(c))\|_{\rm{F}} - \|\sin \overrightarrow{\Theta}(\widehat{\Vc}_d(c),\widetilde{\Vc}_d(c))\|_{\rm{F}}\\
    &\geq d - \frac{2M}{c^2}.
    \end{align*}
    The triangle inequality holds since $\|\sin \overrightarrow{\Theta}(\Vc,\Uc)\|_{\rm{F}} = \|\Pi_{\Vc} - \Pi_{\Uc}\|_{\rm{F}}/\sqrt{2}$ where $\Pi_{\Vc},$ and $\Pi_{\Uc}$ are the projection matrices of $\Vc$ and $\Uc$, respectively. Thus, for any $\epsilon >0$, we can find a contaminated data $\Zv_c$ such that
    $\|\sin \overrightarrow{\Theta}(\widehat{\Vc}_d,\widehat{\Vc}_d(c))\|_{\rm{F}} \geq d - \epsilon$ by taking a sufficiently large $c$. It means $\overline{\bp}(\Vc_d ;\Xv)  \leq \frac{d}{n}$.

    $\|\Pi\uv_0\| \leq \epsilon$, and $\|\vv_0 - \wv_0 \| \leq \epsilon$ for some unit vector $\wv_0 \in \widetilde{\Vc}_d^{\perp}$.

    On the other hand, we show $\overline{\bp}(\Vc_d ;\Xv)  > \frac{d-1}{n}$. For a $d$-dimensional subspace $\Vc_d$, let $\Pi_{\Vc_d}$ be the projection matrix onto $\Vc_d$. Assume that, for any $\epsilon \in [0,1)$, there exists the contaminated data $\Zv$ satisfying 
    $\|\Pi_{\Vc_d(\Zv)}\Pi_{\Vc_d(\Xv)}\| \leq \|\Pi_{\Vc_d(\Zv)}\Pi_{\Vc_d(\Xv)}\|_{\rm{F}} < \epsilon$ with only $d-1$ contaminated data points. Note that this assumption is equivalent to $\overline{\bp}(\Vc_d;\Xv) \leq \frac{d-1}{n}$. Without loss of generality, let $\Zv = (\zv_1,\dots,\zv_{d-1},\xv_d,\dots,\xv_n)'$. Let $\widetilde{\Vc}_d := \Vc_d(\Zv)$, $\widehat{\Vc}_d := \Vc_d(\Xv)$, $\widetilde{\Pi} = \Pi_{\widetilde{\Vc}_d}$, and $\widehat{\Pi} = \Pi_{\widehat{\Vc}_d}$. We can find unit vectors $\vv_0 \in \widehat{\Vc}_d$ and $\uv_0 \in \widetilde{\Vc}_d$ such that $\xv_i'\vv_0 =0$ and $\zv_i'\uv_0=0$ for all $i=1,\dots,d-1$ since $\widehat{\Vc}_d$ and $\widetilde{\Vc}_d$ are $d$-dimension and spanning subspace of $(d-1)$ data points are at most $(d-1)$-dimension. Then, 
    \begin{align*}
        \widehat{\lambda}_d 
        = \min_{\vv \in \widehat{\Vc}_d, \|\vv\|=1} \vv' \Xv'\Xv \vv /n 
        \leq \vv_0'\Xv'\Xv\vv_0 = \vv_0'\Xv_{-}'\Xv_{-}\vv_0,
    \end{align*}
    where $\Xv_{-} = [\xv_{d+1},\dots, \xv_n]' \in \Rb^{(n-d) \times p}$. Since $\|\widehat{\Pi}\widetilde{\Pi}\|_{\rm{F}} < \epsilon$, we know that
    \begin{align*}
        \|\widehat{\Pi} \uv_0\| &=  \|\widehat{\Pi} \widetilde{\Pi} \uv_0\| \leq \epsilon,\\
        \|\widetilde{\Pi} \vv_0\| &=  \|\widetilde{\Pi} \widehat{\Pi} \vv_0\| \leq \epsilon,
    \end{align*}
    and for $\wv_0 := \frac{(\Iv_p - \widetilde{\Pi})\vv_0}{\|(\Iv_p - \widetilde{\Pi})\vv_0\|} \in \widetilde{\Vc}_d^{\perp}$, 
    \begin{align*}
        \|\vv_0 - \wv_0\|^2 &=  2 - 2\|(\Iv_p - \widetilde{\Pi})\vv_0\|\\
        &\leq 2 - 2(\| \vv_0 \| - \|\widetilde{\Pi}\vv_0\|)\\
        &\leq 2\epsilon.
    \end{align*}
    Let $\widetilde{\lambda}_j$ be the $j$th largest eigenvalue of $\frac{1}{n}\Zv'\Zv$. Then, we have
    \begin{align*}
        \widetilde{\lambda}_d 
        &=  \min_{\vv \in \widetilde{\Vc}_d, \|\vv\|=1} \vv'\Zv'\Zv \vv /n\\
        &\leq \uv_0'\Zv'\Zv \uv_0 /n\\
        &= \uv_0'\Xv_{-}'\Xv_{-} \uv_0 /n\\
        &\leq \uv_0'\Xv'\Xv \uv_0 /n\\
        &= \uv_0'(\widehat{\Pi} + \widehat{\Pi}^{\perp})\Xv'\Xv(\widehat{\Pi} + \widehat{\Pi}^{\perp}) \uv_0 /n\\
        &= \uv_0'\widehat{\Pi}^{\perp}\Xv'\Xv\widehat{\Pi}^{\perp}\uv_0 /n + 2\uv_0\widehat{\Pi}\Xv'\Xv\widehat{\Pi}^{\perp}\uv_0 /n + \uv_0\widehat{\Pi}\Xv'\Xv\widehat{\Pi}\uv_0 /n\\
        &\leq \|\widehat{\Pi}^{\perp}\Xv'\Xv\widehat{\Pi}^{\perp}/n\| + 2\|\widehat{\Pi} \uv_0\|\cdot \|\Xv'\Xv /n\|\cdot  \|\widehat{\Pi}^{\perp}\uv_0\| + \|\widehat{\Pi} \uv_0'\|^2 \cdot \|\Xv'\Xv /n\|\\
        &\leq \widehat{\lambda}_{d+1} + 3\widehat{\lambda}_1\epsilon,
        \end{align*}
        where $\widehat{\Pi}^{\perp} = \Iv_p - \widehat{\Pi}$. Meanwhile,
        \begin{align*}
        \widetilde{\lambda}_{d+1} 
        &=  \max_{\vv \in \widetilde{\Vc}_d^{\perp}, \|\vv\|=1} \vv'\Zv'\Zv \vv /n\\
        &\geq  \max_{\vv \in \widetilde{\Vc}_d^{\perp}, \|\vv\|=1} \vv'\Xv'_{-}\Xv_{-} \vv /n\\
        &\geq  \wv_0'\Xv'_{-}\Xv_{-} \wv_0 /n\\
        &=  (\wv_0-\vv_0 + \vv_0)'\Xv'_{-}\Xv_{-} (\wv_0-\vv_0 + \vv_0) /n\\
        &=  \vv_0'\Xv'_{-}\Xv_{-}\vv_0 /n + 2(\wv_0-\vv_0)'\Xv'_{-}\Xv_{-}\vv_0 /n\\
        & \hspace{0.5cm} + (\wv_0-\vv_0)'\Xv'_{-}\Xv_{-} (\wv_0-\vv_0) /n\\
        &\geq  \widehat{\lambda}_d - 4\widehat{\lambda}_1 \epsilon - 4 \widehat{\lambda}_1 \epsilon^2\\
        &\geq  \widehat{\lambda}_d - 8\widehat{\lambda}_1 \epsilon.
    \end{align*}
    It means, $\widetilde{\lambda}_d - \widetilde{\lambda}_{d+1} \leq  \widehat{\lambda}_{d+1} - \widehat{\lambda}_d + 11\widehat{\lambda}_1 \epsilon$. Since $\epsilon$ is arbitrary, we can find a $\Zv$ such that $\widetilde{\lambda}_d - \widetilde{\lambda}_{d+1} \leq  \widehat{\lambda}_{d+1} - \widehat{\lambda}_d + 11\widehat{\lambda}_1 \epsilon < 0$ by taking sufficiently small $\epsilon$. It is a contradiction. Thus $\overline{\bp}(\Vc_d ;\Xv) = \frac{d}{n}$.
\end{proof}

\subsubsection{Proof of Theorem~\ref{thm:breakdown}}
\begin{proof}
Denote $\bp(\Vc_d^{(r)};\Xv) = \epsilon$. It implies that for any small $\epsilon_0 \in (0,1)$, there exists $\Xv_{\epsilon}$ such that $|I_{\epsilon}| := |\{1\leq i \leq n : \xv_i = \xv_{i,\epsilon}\}| = (1-\epsilon)n$, and $\sin \widehat{\Theta} := \sin \Theta\left(\Vc_d^{(r)}(\Xv_{\epsilon}),\Vc_d^{(r)}(\Xv)\right) \geq 1- \epsilon_0$. By Theorem~\ref{thm:perturbation},
\begin{align*}
    1 - \epsilon_0 &\leq \sin \widehat{\Theta} \leq \frac{2r^2 \epsilon}{\widehat{\lambda}_d^{(r)} - \widehat{\lambda}_{d+1}^{(r)}}.
\end{align*}
Since $\epsilon_0$ is arbitrarily small, we have
\begin{align*}
    \bp(\Vc_d^{(r)};\Xv) = \epsilon \geq \frac{\widehat{\lambda}_d^{(r)} - \widehat{\lambda}_{d+1}^{(r)}}{2r^2}.
\end{align*}

For strong breakdown point, let $\Pi \in \Rb^{p \times p}$ be the projection matrix of $\Vc_d(\Xv^{(r)})$ and $\Pi^\perp = (\Iv_p - \Pi)$. Denote $\overline{\bp}(\Vc_d^{(r)} ;\Xv) = \epsilon$. By definition of $\overline{\bp}(\Vc_d^{(r)} ;\Xv)$, for any $\epsilon_0 \in (0,1)$, there exists $\Xv_{\epsilon} = [\xv_{1,\epsilon},\dots,\xv_{n,\epsilon}]' \in \Rb^{n\times p}$ such that $|I_{\epsilon}| \coloneqq |\{1\leq i \leq n : \xv_{i,\epsilon} = \xv_i\}| = (1-\epsilon)n$, and eigen bases $\vv_{j,\epsilon}$ corresponding to the $j$th largest eigenvalue of $\frac{1}{n}\left(\Xv_{\epsilon}^{(r)\prime}\Xv_{\epsilon}^{(r)}\right)$ and $\|\Pi \vv_{j,\epsilon}\|_2 \leq \epsilon_0$ for $j=1,\dots,d$. Then, for each $j=1,\dots,d$ we have
\begin{align*}
    \vv_{j,\epsilon}^{\perp \prime} \left(\frac{1}{n} \Xv_{\epsilon}^{(r)\prime}\Xv_{\epsilon}^{(r)}\right) \vv_{j,\epsilon}
    &= \vv_{j,\epsilon}'\left(\frac{1}{n} \sum_{i \not\in I_{\epsilon}} \xv_{i,\epsilon}^{(r)} \xv_{i,\epsilon}^{(r)\prime}\right)\vv_{j,\epsilon} \\
    & \hspace{0.5cm}+ \vv_{j,\epsilon}' \left(\frac{1}{n} \sum_{i \in I_{\epsilon}} \xv_i^{(r)} \xv_i^{(r)\prime}\right)\vv_{j,\epsilon}\\
   &\leq  r^2\epsilon \\
   & \hspace{0.5cm}+ \vv_{j,\epsilon}' (\Pi^\perp + \Pi) \left(\frac{1}{n} \sum_{i \in I_{\epsilon}} \xv_i^{(r)} \xv_i^{(r)\prime}\right)(\Pi^\perp + \Pi) \vv_{j,\epsilon}\\
   &\leq  r^2\epsilon + 3 r^2(1-\epsilon)\epsilon_0 \\
   & \hspace{0.5cm} + \vv_{j,\epsilon}'\left( \Pi^\perp \left(\frac{1}{n} \sum_{i \in I_{\epsilon}}\xv_i^{(r)} \xv_i^{(r)\prime}\right) \Pi^\perp \right)\vv_{j,\epsilon}\\
   &\leq  r^2\epsilon + 3 r^2(1-\epsilon)\epsilon_0 \\
   & \hspace{0.5cm}+ \vv_{j,\epsilon}'\left( \Pi^\perp \left(\frac{1}{n} \Xv^{(r)\prime}\Xv^{(r)}\right) \Pi^\perp \right)\vv_{j,\epsilon}.
\end{align*}
Meanwhile,
\begin{align*}
    \vv_{j,\epsilon}' &\left(\frac{1}{n} \Xv_{\epsilon}^{(r)\prime}\Xv_{\epsilon}^{(r)}\right) \vv_{j,\epsilon}\\
    &= \eta_{j}\left(\frac{1}{n} \Xv_{\epsilon}^{(r)\prime}\Xv_{\epsilon}^{(r)}\right)\\
    &= \eta_j\left( \frac{1}{n}\left( \sum_{i =1}^n\xv_i^{(r)}\xv_i^{(r)\prime} \right) + \frac{1}{n} \left(\sum_{i \not\in I_{\epsilon}} \xv_{i,\epsilon}^{(r)}\xv_{i,\epsilon}^{(r)\prime} - \xv_{i}^{(r)}\xv_{i}^{(r)\prime}\right)\right)\\
    &\geq\eta_j\left(\frac{1}{n} \Xv^{(r)\prime}\Xv^{(r)}\right) - r^2\epsilon,
\end{align*}
where $\eta_j(\Av)$ is the $j$th largest eigenvalue of $\Av \in \Rb^{p\times p}$. Thus we have 
\begin{align*}
    2r^2\epsilon + 3r^2 (1-\epsilon)\epsilon_0 
    &\geq \eta_j\left(\frac{1}{n} \Xv^{(r)\prime}\Xv^{(r)}\right) - \vv_{j,\epsilon}'\left( \Pi^\perp \left(\frac{1}{n} \Xv^{(r)\prime}\Xv^{(r)}\right) \Pi^\perp \right)\vv_{j,\epsilon}\\
    &=\widehat{\lambda}_j^{(r)} - \vv_{j,\epsilon}'\left( \Pi^\perp \left(\frac{1}{n} \Xv^{(r)\prime}\Xv^{(r)}\right) \Pi^\perp \right)\vv_{j,\epsilon}\\
    &=\widehat{\lambda}_j^{(r)} - \tr\left( \Pi^\perp \left(\frac{1}{n} \Xv^{(r)\prime}\Xv^{(r)} \right) \Pi^\perp \vv_{j,\epsilon}\vv_{j,\epsilon}' \right),
\end{align*}
for $j=1,\dots,d$. Since $(\vv_{1,\epsilon},\dots,\vv_{d,\epsilon})'(\vv_{1,\epsilon},\dots,\vv_{d,\epsilon}) = \Iv_d$, for every $d_0 = 1,\dots,d$,
\begin{align*}
    &d_0(2r^2\epsilon + 3r^2 (1-\epsilon)\epsilon_0 )\\
    &\geq \sum_{j=1}^{d_0}\left(\widehat{\lambda}_j^{(r)} - \tr\left( \Pi^\perp \left(\frac{1}{n} \Xv^{(r)\prime}\Xv^{(r)} \right) \Pi^\perp \vv_{j,\epsilon}\vv_{j,\epsilon}' \right)\right)\\
    &\geq \sum_{j=1}^{d_0}\widehat{\lambda}_j^{(r)} - \sup_{u_1,\dots,u_{d_0}}\sum_{j=1}^{d_0}\tr\left( \Pi^\perp \left(\frac{1}{n} \Xv^{(r)\prime}\Xv^{(r)} \right) \Pi^\perp \uv_j\uv_j' \right)\\
    &= \sum_{j=1}^{d_0}\widehat{\lambda}_j^{(r)} - \sup_{u_1,\dots,u_{d_0}}\tr\left( \Pi^\perp \left(\frac{1}{n} \Xv^{(r)\prime}\Xv^{(r)} \right) \Pi^\perp \sum_{j=1}^{d_0}\uv_j\uv_j' \right)\\
    &\geq \sum_{j=1}^{d_0}\widehat{\lambda}_j^{(r)} - \sum_{j=1}^{d_0}\widehat{\lambda}_{d+j}^{(r)}.
\end{align*}
where the supremum is taken over all possible $\uv_1,\dots,\uv_{d_0}$ satisfying orthonormality, $(\uv_1,\dots,\uv_{d_0})'(\uv_1,\dots,\uv_{d_0}) = \Iv_{d_0}$. Here, the last inequality is obtained from Von Neumann's trace inequality. Since $\epsilon_0$ is arbitrary, we have
\begin{align*}
    \epsilon\geq \frac{\sum_{j=1}^{d_0}\widehat{\lambda}_j^{(r)} - \sum_{j=1}^{d_0}\widehat{\lambda}_{d+j}^{(r)}}{2r^2d_0}.
\end{align*}
where $\widehat{\lambda}_j^{(r)} = 0 $ for $j > p$.

\end{proof} 

\subsubsection{Proof of Theorem~\ref{thm:perturbation}}
\begin{proof}
    We adopted two inequality in \cite{yuUsefulVariantDavis2015a,caiRateoptimalPerturbationBounds2018a}. 
    Let $\Ic_{\epsilon} = \{i : \xv_i = \xv_{i,\epsilon}\}$ and $\Xv_{\epsilon}^{(r)\prime} = [\xv_{1,\epsilon}^{(r)},\dots,\xv_{n,\epsilon}^{(r)}]$. By Lemma~\ref{lem:kahan}, we have
    \begin{align*}
        \sin \widehat{\Theta}_{\epsilon}^{(r)} &\leq \frac{2 \|\frac{1}{n}\Xv^{(r)\prime}\Xv^{(r)\prime} - \frac{1}{n} \Xv_{\epsilon}^{(r)\prime}\Xv_{\epsilon}^{(r)\prime}\|}{\widehat{\lambda}_d^{(r)} - \widehat{\lambda}_{d+1}^{(r)}}\\
        &= \frac{2\|\frac{1}{n} (\sum_{i \not\in I_{\epsilon}}\xv_i^{(r)}\xv_i^{(r)\prime} - \xv_{i,\epsilon}^{(r)}\xv_{i,\epsilon}^{(r)\prime})\|}{\widehat{\lambda}_d^{(r)} - \widehat{\lambda}_{d+1}^{(r)}}\\
        &\leq \frac{2r^2\epsilon}{\widehat{\lambda}_d^{(r)} - \widehat{\lambda}_{d+1}^{(r)}}.
    \end{align*}
    
    For the second upper bound, $\sin \widehat{\Theta}_{\epsilon}^{(r)} \leq \frac{r^2 \epsilon}{\widehat{\lambda}_d^{(r)}- \widehat{\lambda}_{d+1}^{(r)} - 2r^2 \epsilon}$, assume that $\widehat{\lambda}_d^{(r)}- \widehat{\lambda}_{d+1}^{(r)} > 4r^2\epsilon$. Let $\widehat{\Wv} = [\widehat{\Wv}_d,\widehat{\Wv}_\perp]$ be the orthogonal matrix where $\widehat{\Wv}_d$ is the right singular vector corresponding to the $d$ largest singular values of $\Xv^{(r)}$. Using the proposition in \cite{caiRateoptimalPerturbationBounds2018a}, we have
\begin{align*}
    \sin\widehat{\Theta}_{\epsilon}^{(r)} 
    &\leq \dfrac{\sigma_d(\Xv_{\epsilon}^{(r)}\widehat{\Wv}_d) \|\Pi_{(\Xv_{\epsilon}^{(r)}{\widehat{\Wv}_d})}  \Xv_{\epsilon}^{(r)} \widehat{\Wv}_\perp\|}{\sigma_d^2(\Xv_{\epsilon}^{(r)}{\widehat{\Wv}_d}) - \sigma_{d+1}^2(\Xv_{\epsilon}^{(r)})}
    \end{align*}
    where $\sigma_j(\Av)$ is the $j$th largest singular value of $\Av$, $\Pi_{(\Xv_{\epsilon}^{(r)}{\widehat{\Wv}_d})}$ is the projection operator onto the column space of $\Xv_{\epsilon}^{(r)}{\widehat{\Wv}_d}$. The inequality holds when $\sigma_d^2(\Xv_{\epsilon}^{(r)}\widehat{\Wv}_d) - \sigma_{d+1}^2(\Xv_{\epsilon}^{(r)}) > 0$. Then,
    \begin{align*}
        &\sigma_d^2(\Xv_{\epsilon}^{(r)}{\widehat{\Wv}_d}) \\
        =& \sigma_{d}(\widehat{\Wv}_d'\Xv_{\epsilon}^{(r)\prime}\Xv_{\epsilon}^{(r)}\widehat{\Wv}_d)\\
        \geq& \sigma_{d}(\widehat{\Wv}_d'\sum_{i \in I_{\epsilon}}\xv_i^{(r)}\xv_i^{(r)\prime}\widehat{\Wv}_d)\\
        \geq& \sigma_{d}(\widehat{\Wv}_d'\sum_{i=1}^n\xv_i^{(r)}\xv_i^{(r)\prime}\widehat{\Wv}_d) - r^2m_n\\
        =& n \widehat{\lambda}_d^{(r)} - r^2m_n,\\
        &\sigma_{d+1}^2(\Xv_{\epsilon}^{(r)}) \\
        =& \sigma_{d+1}(\Xv_{\epsilon}^{(r)\prime}\Xv_{\epsilon}^{(r)})\\
        \leq& \sigma_{d+1}(\sum_{i \in I_{\epsilon}}\xv_i^{(r)}\xv_i^{(r)\prime}) + r^2m_n\\
        \leq& \sigma_{d+1}(\sum_{i=1}^n\xv_i^{(r)}\xv_i^{(r)\prime}) + r^2m_n\\
        =& n \widehat{\lambda}_{d+1}^{(r)} + r^2 m_n,\\
        &\|\Pi_{(\Xv^{(r)}_{\epsilon}\widehat{\Wv}_d)}  \Xv^{(r)}_{\epsilon} \widehat{\Wv}_\perp\|^2\\
        =& \sigma_1(\widehat{\Wv}_\perp'\Xv^{(r)\prime}_{\epsilon} \Pi_{(\Xv^{(r)}_{\epsilon}\widehat{\Wv}_d)} \Xv^{(r)}_{\epsilon}\widehat{\Wv}_\perp )\\   =&\sigma_1(\widehat{\Wv}_\perp'\Xv^{(r)\prime}_{\epsilon}\Xv^{(r)}_{\epsilon}\widehat{\Wv}_d\times (\widehat{\Wv}_d'\Xv^{(r)\prime}_{\epsilon}\Xv^{(r)}_{\epsilon}\widehat{\Wv}_d)^{-1}\times \widehat{\Wv}_d'\Xv^{(r)\prime}_{\epsilon}\Xv^{(r)}_{\epsilon}\widehat{\Wv}_\perp )\\
        \leq& \frac{\sigma_1^2(\widehat{\Wv}_\perp'\Xv^{(r)\prime}_{\epsilon} \Xv^{(r)}_{\epsilon}\widehat{\Wv}_d)}{\sigma_d(\widehat{\Wv}_d'\Xv^{(r)\prime}_{\epsilon}\Xv^{(r)}_{\epsilon}\widehat{\Wv}_d)},\\
        &\sigma_1(\widehat{\Wv}_\perp'\Xv^{(r)\prime}_{\epsilon} \Xv^{(r)}_{\epsilon}\widehat{\Wv}_d)\\
        =& \sigma_1(\widehat{\Wv}_\perp'\Xv^{(r)\prime}_n \Xv^{(r)}_n\widehat{\Wv}_d - \widehat{\Wv}_\perp'\sum_{i \not \in I_{\epsilon}}(\xv^{(r)}_i\xv^{(r)\prime}_i - \xv^{(r)}_{i,\epsilon}\xv^{(r)\prime}_{i,\epsilon})\widehat{\Wv}_d)\\
    \leq& \sigma_1(\widehat{\Wv}_\perp'\sum_{i \not \in I_{\epsilon}}(\xv^{(r)}_i\xv^{(r)\prime}_i - \xv^{(r)}_{i,\epsilon}\xv_{i,\epsilon}^{(r)\prime})\widehat{\Wv}_d)\\
        \leq &r^2 m_n.
    \end{align*}
    Thus we have
    \begin{align*}
        \sin\widehat{\Theta}_{\epsilon}^{(r)} \leq \dfrac{r^2 \epsilon }{\widehat{\lambda}_d^{(r)} - \widehat{\lambda}_{d+1}^{(r)} - 2r^2\epsilon}.
    \end{align*}
\end{proof}

\subsection{Empirical Expectation example generation in Section~\ref{sec:effectR}}\label{sec:singeneration}
Let $n = 200$, $p = 100$, $d = 1$, and $\Sigmav = \diag(100, 1, \dots, 1)$. The data points $\xv_1, \dots, \xv_n$ are independently generated from the multivariate Gaussian distribution $N_p(\0v, \Sigmav)$ or $t_3(\0v, \Sigmav)$ with $\0v = (0, \dots, 0)' \in \Rb^p$. Here $\Sigmav$ in $t_3(\0v, \Sigmav)$ implies the covariance matrix of $t_3(\0v, \Sigmav)$.

For the outliers, we replace the first $m := 0.05n = 10$ data points $\xv_1, \dots, \xv_m$ with $\zv_i = (0, 100np, 0, \dots, 0)$ for $i = 1, \dots, 0.05n$. Consequently, the contaminated data with $m$ outliers is denoted by $$\Xv_{m/n} = [\zv_1, \dots, \zv_m,  \xv_{m+1}, \dots, \xv_n]'.$$ We replicate the experiment 100 times.

\subsection{Empirical Expectation example generation in Section~\ref{sec:numeric}}\label{sec:numeric_generation}
For $k=1,\dots,4$, we set the dimension $p_k = 1000k$, and the sample size $n_k$ to satisfy $p_k/n_k = 1/2k$. For the non-spiked model where the eigenvalues of the covariance matrix $\Sigmav$ are constant, $\Sigmav = \diag(2^2,3^2)$. For the spiked model, we set $\Sigmav = \diag(2^2\sqrt{p_k}, 3^2\sqrt{p_k})$. The radii we concern are $r_{1k} = 1$, $r_{2k} = \sqrt{p_k}$, and $r_{3k} = \sqrt{p_k \log p_k}$. The data points $\xv_1, \dots, \xv_n$ are independently generated from the multivariate Gaussian distribution $N_p(\0v, \Sigmav)$ or $t_3(\0v, \Sigmav)$. Here $\Sigmav$ in $t_3(\0v, \Sigmav)$ implies the covariance matrix of $t_3(\0v, \Sigmav)$.

For the outliers, we replace the first $m := 2$ data points $\xv_1, \xv_2$ with $\zv_i = (\0v_d', n_k p_k, \0v_{p_k -d -1}')$ for $i = 1, 2$. Here $\0v_l$ is the zero vector with the length $l$. Consequently, the contaminated data with $m$ outliers is denoted by $\Xv_{m/n} = [\zv_1, \zv_2, \xv_{3}, \dots, \xv_n]'$. We replicate the experiment 10 times.

\subsection{Data Generation for Estimated Lower Bounds in Section~\ref{sec:breakdown_winsor}}\label{sec:breakdown_generation}
Let $n=1000$, $p=4$, $d=2$, and $\Sigmav = \diag(5^2,5^2,5,1)$. The data points $\xv_1,\dots,\xv_n$ are independently generated from the multivariate Gaussian distribution $N_p(\0v , \Sigmav)$, where $\0v = (0,0)'$. The data matrix is represented as $\Xv = [\xv_1,\dots,\xv_n]'\in \Rb^{n\times p}$. We replicate the experiment 1000 times.

\subsection{Toy example generation in Section~\ref{sec:pertur}}\label{sec:toygeneration}
Let $n=1000$, $p=2$, $d=1$, and $\Sigmav = \diag(5^2,1)$. The data points $\xv_1,\dots,\xv_n$ are independently generated from the bivariate Gaussian distribution $N_p(\0v , \Sigmav)$, where $\0v = (0,0)'$. The data matrix is represented as $\Xv = [\xv_1,\dots,\xv_n]'\in \Rb^{n\times p}$. Define $r = \mathrm{med}_i{\|\xv_i\|_2}$. To introduce outliers, we replace the first $m$ data points $\xv_1,\dots,\xv_m$ with $\zv_i= (0,\max_i{\|\xv_i\|_2^2} + 100)$ for $i=1,\dots,m$. Note that the outliers need not be excessively large, as we are only concerned with the median radius. Consequently, the contaminated data with $m$ outliers is denoted by $\Xv_{m/n} = [\zv_1,\dots,\zv_m,\xv_{m+1},\dots,\xv_n]'$.

\end{document}